% \documentclass[preprint,12pt,authoryear]{elsarticle}

%% Use the option review to obtain double line spacing
%% \documentclass[authoryear,preprint,review,12pt]{elsarticle}

%% Use the options 1p,twocolumn; 3p; 3p,twocolumn; 5p; or 5p,twocolumn
%% for a journal layout:
%% \documentclass[final,1p,times,authoryear]{elsarticle}
%%\documentclass[final,1p,times,twocolumn,authoryear]{elsarticle}
\documentclass[final,3p,times,authoryear]{elsarticle}
%% \documentclass[final,3p,times,twocolumn,authoryear]{elsarticle}
%%\documentclass[final,5p,times,authoryear]{elsarticle}
%%\documentclass[final,5p,times,twocolumn,authoryear]{elsarticle}

%% For including figures, graphicx.sty has been loaded in
%% elsarticle.cls. If you prefer to use the old commands
%% please give \usepackage{epsfig}

%% The amssymb package provides various useful mathematical symbols
\usepackage{amssymb}
\usepackage{amsfonts}
\usepackage{amsmath}
\usepackage{amsthm}
\usepackage{algorithm}
\usepackage{algpseudocode}
\usepackage{booktabs}
\usepackage{dirtytalk}
\usepackage{longtable}
\usepackage{hyperref}
\usepackage{multicol}
\usepackage{multirow}
\usepackage{pgfplots}
\usepackage{pgfplotstable}
\usepgfplotslibrary{groupplots}
\pgfplotsset{compat=newest}
\usepackage{plantuml}
\usepackage{subcaption}
\usepackage{sansmath}
\usepackage{tabularx}
\usepackage{tikz}
\usetikzlibrary{fit, positioning}
\usepackage{xcolor}

\newtheorem{theorem}{Theorem}
\newtheorem{proposition}{Proposition}
\newtheorem{remark}{Remark}
%% The amsthm package provides extended theorem environments
%% \usepackage{amsthm}

%% The lineno packages adds line numbers. Start line numbering with
%% \begin{linenumbers}, end it with \end{linenumbers}. Or switch it on
%% for the whole article with \linenumbers.
%% \usepackage{lineno}

\journal{Computers \& Operations Research}

\begin{document}

\begin{frontmatter}

%% Title, authors and addresses

%% use the tnoteref command within \title for footnotes;
%% use the tnotetext command for theassociated footnote;
%% use the fnref command within \author or \affiliation for footnotes;
%% use the fntext command for theassociated footnote;
%% use the corref command within \author for corresponding author footnotes;
%% use the cortext command for theassociated footnote;
%% use the ead command for the email address,
%% and the form \ead[url] for the home page:
%% \title{Title\tnoteref{label1}}
%% \tnotetext[label1]{}
%% \author{Name\corref{cor1}\fnref{label2}}
%% \ead{email address}
%% \ead[url]{home page}
%% \fntext[label2]{}
%% \cortext[cor1]{}
%% \affiliation{organization={},
%%            addressline={}, 
%%            city={},
%%            postcode={}, 
%%            state={},
%%            country={}}
%% \fntext[label3]{}

%% use optional labels to link authors explicitly to addresses:
%% \author[label1,label2]{}
%% \affiliation[label1]{organization={},
%%             addressline={},
%%             city={},
%%             postcode={},
%%             state={},
%%             country={}}
%%
%% \affiliation[label2]{organization={},
%%             addressline={},
%%             city={},
%%             postcode={},
%%             state={},
%%             country={}}

\title{Minimizing the Weighted Number of Tardy Jobs: Data-Driven Heuristic for Single-Machine Scheduling}

\author[a]{Nikolai Antonov\corref{cor1}}
\ead{antonni1@fel.cvut.cz}
\author[b]{Přemysl Šůcha}
\ead{premysl.sucha@cvut.cz}
\author[b]{Mikoláš Janota}
\ead{mikolas.janota@cvut.cz}
\author[b]{Jan Hůla}
\ead{jan.hula@cvut.cz}

\cortext[cor1]{Corresponding author}

\affiliation[a]{organization={Czech Technical University in Prague, Faculty of Electrical Engineering}, 
                addressline={Technická 2}, 
                city={Prague}, 
                postcode={166 27}, 
                country={Czech Republic}}
                
\affiliation[b]{organization={Czech Technical University in Prague, CIIRC}, 
                addressline={Jugoslávských partyzánů 1580/3}, 
                city={Prague}, 
                postcode={160 00}, 
                country={Czech Republic}}

\begin{abstract} 
Existing research on single-machine scheduling is largely focused on exact algorithms, which perform well on typical instances but can significantly deteriorate on certain regions of the problem space. In contrast, data-driven approaches provide strong and scalable performance when tailored to the structure of specific datasets.
Leveraging this idea, we focus on a single-machine scheduling problem where each job is defined by its weight, duration, due date, and deadline, aiming to minimize the total weight of tardy jobs. 
We introduce a novel data-driven scheduling heuristic that combines machine learning with problem-specific characteristics, ensuring feasible solutions, which is a common challenge for ML-based algorithms. 
Experimental results demonstrate that our approach significantly outperforms the state-of-the-art in terms of optimality gap, number of optimal solutions, and adaptability across varied data scenarios, highlighting its flexibility for practical applications.
In addition, we conduct a systematic exploration of ML models, addressing a common gap in similar studies by offering a detailed model selection process and providing insights into why the chosen model is the best fit.
\end{abstract}

%%%Graphical abstract
%\begin{graphicalabstract}
%%\includegraphics{grabs}
%\end{graphicalabstract}

%%Research highlights
%\begin{highlights}
%\item Utilization of ML in scheduling for efficient decision-making
%\item Insights on selecting the best ML model 
%\item Wide applicability of the developed ML-based heuristic across various distributions
%\item Superior to the state-of-the-art performance of the proposed heuristic
%\end{highlights}

\begin{keyword}
%% keywords here, in the form: keyword \sep keyword
scheduling \sep data-driven \sep ML \sep single-machine \sep tardy jobs \sep deadlines

%% PACS codes here, in the form: \PACS code \sep code

%% MSC codes here, in the form: \MSC code \sep code
%% or \MSC[2008] code \sep code (2000 is the default)

\end{keyword}

\end{frontmatter}

%% \linenumbers

%% main text

\section{Introduction}\label{sec:Introduction}

This article addresses a strongly NP-hard single-machine scheduling problem \citep{Lawler1983, Yuan2017} of sustained theoretical interest \citep{Hermelin2024} and practical relevance as a component in complex scheduling applications \citep{Sarin2010}. 
The problem essence can be illustrated by a single production line, where the entire work is split into pieces known as \emph{jobs}. 
Due to the technical requirements, we can process only one job at a time, with no interruptions until it is completed.
Every job has two deadlines: a soft \emph{due date}, allowing for penalties if violated, and a hard deadline, which must be strictly met to avoid a production halt.
The objective is to complete all jobs by their hard deadlines while minimizing the total penalty from due date violations. 
In scheduling notation \citep{Graham1979}, this problem is denoted as $1|\tilde{d}_i|\sum w_i U_i$.

Traditional exact approaches, such as integer linear programming (ILP) and dedicated branch-and-bound algorithms \citep{Hejl2022, Baptiste2010}, face scalability and performance limitations under specific data distributions.
These challenges motivate us to find a data-driven solution that leverages machine learning (ML) to enhance the efficiency of scheduling in large and diverse instances.
We introduce novel data-driven scheduling heuristic to minimize the weighted number of tardy jobs on a single machine, building upon~\cite{Antonov2023}.
The key contributions of our work are as follows:
\begin{itemize}
\item
We address a common challenge in combinatorial optimization where ML-based methods often struggle to guarantee feasible solutions~\citep{Bengio2021}. Our approach is specifically designed to always produce a feasible solution when one exists, effectively overcoming this limitation in the context of our scheduling problem.
\item
We provide valuable insights into ML model selection for the studied problem. Building on our previous work, we introduce new, effective representative features and perform a comprehensive evaluation of various ML architectures. The resulting model trains efficiently on small datasets and generalizes well to larger instances. We also offer practical guidance for selecting suitable models tailored to the considered problem.
\item
We evaluate our heuristic using both uniformly distributed data, which is a common practice in operations research, and more realistic datasets where job parameters follow normal, log-normal, and exponential distributions. This broadens the applicability of our approach, and the experiments demonstrate significant improvements in flexibility over the existing methods.
\item
Our heuristic consistently outperforms the state-of-the-art approaches \citep{Baptiste2010, Antonov2023} in solution quality within the same time limits, achieving a significantly smaller optimality gap and obtaining optimal solutions in 80-100\% of cases.
\end{itemize}

The paper is organized as follows.
Section~\ref{sec:Literature Review} provides a review of relevant literature.
Section~\ref{sec:Problem Statement} presents the problem statement.
Section~\ref{sec:Solution Approach} describes our proposed solution approach, including all pertinent details.
Section~\ref{sec:ML Methodology} discusses feature generation and the machine learning models employed.
Section~\ref{sec:Experimental Results} reports on the conducted experiments and compares our approach with the state-of-the-art.
Finally, Section~\ref{sec:Conclusion} concludes the paper and outlines directions for future research.

\section{Literature Review}\label{sec:Literature Review}
As both scheduling and machine learning domains are related to our problem, we split the review into two parts: one dedicated to scheduling and another to data-driven approaches, specifically focusing on machine learning.

\subsection{Related Scheduling Approaches}

Early studies on the problem date back to~\cite{Hariri1994}, where the authors introduced a branch-and-bound algorithm handling up to 300 jobs within one hour.~With advancements in ILP solvers, the size of solvable instances has progressively increased. Based on our experience with modern solvers, an ILP formulation for the considered problem reliably works for small instances up to 500 -- 1500 jobs. However, the number of constraints in the formulation grows quadratically with problem size, leading to slow model construction and memory overflow issues.

The state-of-the-art exact approach is the branch-and-bound algorithm proposed by~\cite{Baptiste2010}, which is capable of solving up to 30000 jobs within an hour. While memory overload is not an issue, the performance of the algorithm is not reliable -- its efficiency varies depending on the dataset. The algorithm significantly deteriorates for specific instance classes, as shown in~\citep{Baptiste2010, Hejl2022} and confirmed in our tests on a distinct instance class (see Section~\ref{subsec:further_discussion}). For example,~\cite{Baptiste2010} observed that their algorithm struggled with approximately 3\% of instances with only 250 jobs, particularly those with linearly correlated job durations and weights. \citet{Hejl2022} refined this algorithm, achieving improved performance on correlated instances, scaling up to 5000 jobs per hour; however, this improvement remains limited to linearly correlated cases. 

Considering another limitation, the algorithm proposed by~\cite{Baptiste2010} demonstrates the highest performance when job parameters follow a uniform distribution. While this is a common assumption in operations research, it does not reflect realistic scenarios. For example, in surgery scheduling, the durations of urgent surgeries often follow a Poisson distribution, whereas the durations of elective surgeries tend to follow a log-normal distribution~\citep{Vanessen2012}. Additional examples are provided in Section~\ref{subsec:datasets}. We observed that under these more realistic conditions, the algorithm consistently fails to solve a certain percentage of small- and mid-sized instances (detailed results can be found in Sections~\ref{subsec:results} and~\ref{subsec:further_discussion}).

In summary, these findings emphasize the need for adaptive strategies -- particularly effective data-driven heuristics -- to address computational challenges across diverse datasets. This perspective aligns with the No-Free-Lunch theorem, which states that no single algorithm performs best across all possible instances. In real-world settings that involve large and heterogeneous problems, scalable heuristic and data-driven methods often offer the most practical and robust alternative.

Only a limited number of studies have explored heuristic algorithms targeting the $1|\tilde{d}_i|\sum w_i U_i$ problem, primarily due to the historical emphasis on exact optimization methods~\citep{Muminu2014}. Among these, metaheuristic approaches have been considered, particularly by \citet{Sevaux2003}, who addressed the closely related problem of minimizing the weighted number of tardy jobs on a single machine without enforcing strict deadlines. However, the existing literature on metaheuristic methods for our specific scheduling problem remains notably outdated, though we can indicate the Honey Badger metaheuristic by~\citep{Hashim2022}, which has shown promising results across diverse scheduling problems~\citep{Hassan2024}.

Traditional rule-based heuristics such as \emph{Earliest Deadline First (EDF)}, \emph{Earliest Due Date first (EDD)}, and \emph{Apparent Tardiness Cost (ATC)} are simple to implement but typically result in substantial optimality gaps in practical scenarios~\citep{Antonov2023}. Notably, among these heuristics, only \emph{EDF} guarantees the feasibility of a solution by ensuring all deadlines are met, provided such a feasible solution exists~\citep{Pinedo2012}.

An effective heuristic algorithm is presented in~\citep{Baptiste2010}. Initially, the authors propose solving a max-profit flow relaxation of the original problem. The obtained solution is then transformed into a feasible solution of the original problem using the dominance rule and ILP (for details on the dominance rule, refer to Section~\ref{subsubsec:cond_probs}). This approach stands as the state-of-the-art heuristic for our problem, and we compare its performance with our method in Section ~\ref{subsec:results}.

\subsection{Related Data-driven Approaches}\label{subsec:related_data-driven_approaches}
The first data-driven applications for scheduling can be traced back to \citep{Franz1989}, which defines the core principles of a data-driven method focusing on the nurse scheduling problem. Subsequently, data-driven methods have been rapidly and successfully employed to address scheduling challenges in various domains, including transportation \citep{Abdelghany2024}, energy supply \citep{Darvazeh2024}, and industry \citep{Rossit2019, Liao2019, Awada2021}, demonstrating significant potential for further integration into the field.

Our approach is inspired by the work of~\cite{Bengio2021}, who highlighted the advantages of using machine learning for various combinatorial optimization problems.
However, the authors raise a critical related challenge: ensuring that the solutions are feasible.
This concern is supported by~\cite{Dias2019}, who discuss the integration of scheduling and planning.
As machine learning is used in our work, we face this challenge as well, tackling it in Section~\ref{subsec:feas_algo} by presenting a framework designed to consistently achieve feasible solutions.

Traditional machine learning methods, such as KNN (K-Nearest Neighbors), SVM (Support Vector Machines), decision trees, or shallow neural networks, have been successfully applied to scheduling problems due to their simplicity, broad applicability, and fast performance. These advantages are illustrated in studies on power supply optimization \citep{Saxena2024, YangZhangYang2022}, with the latter showing the benefits of combining the KNN algorithm and the SVM classifier over more complex neural networks such as LSTM (Long Short-Term Memory). A similar idea appears in \citep{Modos2016}, where KNN and decision trees are used to estimate unknown time–quality trade-off curves, enabling adaptive control of response times in overloaded systems. In general, traditional ML has been employed in two complementary ways. First, it can be integrated directly into the scheduling process. For instance, decision trees have been applied to predict uncertain activity durations in multi-mode project scheduling \citep{Portoleau2024}; ensembles of trees have been trained to learn effective dispatching rules in flexible job-shop and permutation flow-shop problems \citep{Muller2022, Wang2017}; and shallow neural networks have been used to predict task-to-line assignments in production line matching \citep{YangFengGuan2022}. KNN and SVM classifiers have also demonstrated competitive performance in power supply optimization \citep{YangZhangYang2022}. Second, ML can operate at the meta-level, supporting algorithm selection rather than producing schedules directly. In this line, \citet{Chu2023} develop an SVM-based meta-learning framework that recommends the most suitable metaheuristic for the multi-mode RCPSP, while \citet{Uzunoglu2025} apply a learning-to-rank model to select constructive strategies for serial-batch scheduling, with subsequent improvement through tailored genetic algorithms. While our work also integrates ML into the scheduling process, it differs from earlier studies by using classifiers to predict early/tardy job labels. This application directly addresses the central combinatorial challenge of the $1|\tilde{d}_i|\sum w_i U_i$ problem and enables subsequent refinement through ILP and feasibility checks (see Sections~\ref{subsec:ml_models} and \ref{subsec:training}).

The application of deep learning algorithms represents a significant area of data-driven methods in scheduling. Based on recent literature, we identify five major trends in how machine learning is employed: (i) learning dispatching rules and heuristics from data \citep{Jun2020, Janssens2006}; (ii) applying reinforcement learning (RL) for scheduling, including policy construction \citep{Monaci2024, Wu2024, YuanWang2024, Brammer2022, Heger2021, Wang2023}, metaheuristic control \citep{Alicastro2021, Zhang2012}, and multi-agent coordination \citep{LiuPiplaniToro2023}; (iii) accelerating classical optimization procedures, such as branch-and-price, using predictive models \citep{Koutecka2024, Vaclavik2018, Delgoshaei2016}; (iv) estimating objective values via surrogate models \citep{Bouska2022}; and (v) employing attention-based architectures for task selection and schedule modeling \citep{Du2024}.

These approaches also differ in how they address feasibility. In strands (i), (iii), and (iv), feasibility is typically preserved through integration with classical optimization methods -- this is also the case in our work. RL-based approaches (ii) often rely on action masking, reward shaping, or learning implicit constraints through policy training, although feasibility violations can still occur. While feasibility is an important aspect, deep learning methods also face another significant challenge in the context of single-machine scheduling: they often require large training datasets and incur significant training and inference costs. This limitation is addressed in Section~\ref{subsec:ml_models}, where we discuss the application of attention mechanisms \citep{Attention2017}.

\section{Problem Statement}\label{sec:Problem Statement}
We begin by introducing the necessary notations and definitions. Consider a machine (system) capable of performing work divided into pieces, referred to as \emph{jobs}. The machine operates under three basic assumptions: it processes one job at a time, does not interrupt a started job and does not idle -- once a job is completed, the machine immediately starts the next one, continuing until all assigned jobs are finished.

We are given a set of jobs, denoted as \( N = \{1, 2, \dots, n\} \), where each job \( i \in N \) is characterized by its duration \( p_i \), due date \( d_i \), and deadline \( \tilde{d_i} \). These values are positive real numbers, with the constraint \( p_i \leq d_i \leq \tilde{d_i} \) for all \( i \in N \). Additionally, each job \( i \in N \) has a weight (or cost) \( w_i \), representing the job's value. All jobs are available at time 0.

The jobs are processed according to a permutation \( s \) of \( N \), and the completion times of the jobs are denoted as \( C_i^s \) for \( i \in N \). In scheduling terminology, \( s \) represents a \emph{schedule}. We define the set of \emph{early} jobs \( E_s = \{ i \in N \mid C_i^s \leq d_i \} \), which are completed before their due dates, and the set of \emph{tardy} jobs \( T_s = \{ i \in N \mid d_i < C_i^s \leq \tilde{d_i} \} \), which are completed after their due dates but before their deadlines.
A schedule \( s \) is considered \emph{feasible} if \( C_i^s \leq \tilde{d_i} \) for every job \( i \in N \), or equivalently, if \( E_s \cup T_s = N \).

Following the formulation in \citep{Baptiste2010}, we adopt an equivalent \emph{maximization} approach rather than minimization: our goal is to maximize the weighted number of early jobs, with the constraint that each job must meet its deadline. Specifically, we aim to find a schedule \( s^\ast \) that maximizes $f(s) = \sum_{i \in E_s} w_i,$ subject to \( E_s \cup T_s = N \).

\section{Data-driven Approach for $1|\tilde{d}_i|\sum w_i U_i$ problem}\label{sec:Solution Approach}

~\cite{Pinedo2012} shows that if we know whether each job is early or tardy in an optimal schedule, we can solve a $1|\tilde{d}_i|\sum w_i U_i$ problem instance in polynomial time.
Indeed, we can construct an optimal schedule by arranging the jobs in non-descending order of $D_j$ ($j \in N$), where $D_j = d_j$ for early jobs and $D_j = \tilde{d}_j$ for tardy jobs.
Therefore, determining whether a given job is early or tardy is the main challenge.

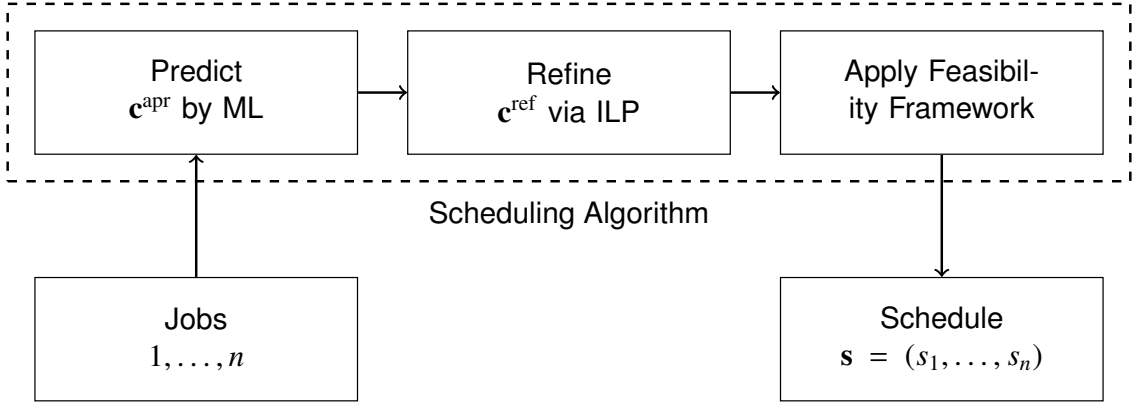
\begin{figure}[t]
\centering
\scalebox{0.9}{\resizebox{\textwidth}{!}{%
\begin{tikzpicture}[every node/.style={font=\sffamily, align=center}]
    % Nodes
    \node (input) [draw, rectangle, minimum height=1.4cm, text width=3.4cm] at (0,-2.8) {
        Jobs \\ $1, \dots, n$
    };

    \node (classification) [draw, rectangle, minimum height=1.4cm, text width=3.4cm] at (0,0) {
        Predict \\ $\mathbf{c}^{\mathrm{apr}}$ by ML
    };

    \node (ilp) [draw, rectangle, minimum height=1.4cm, text width=3.4cm] at (4.2,0) {
        Refine \\ $\mathbf{c}^{\mathrm{ref}}$ via ILP
    };

    \node (feasibility) [draw, rectangle, minimum height=1.4cm, text width=3.4cm] at (8.4,0) {
        Apply Feasibility Framework
    };

    \node (output) [draw, rectangle, minimum height=1.4cm, text width=3.4cm] at (8.4,-2.8) {
        Schedule \\ $\mathbf{s} = (s_1, \dots, s_n)$
    };

    % Arrows
    \draw[->, thick] (input.north) -- (classification.south);
    \draw[->, thick] (classification) -- (ilp);
    \draw[->, thick] (ilp) -- (feasibility);
    \draw[->, thick] (feasibility.south) -- (output.north);

    % Surrounding box for scheduling algorithm
    \node[draw, thick, dashed, inner sep=0.3cm, fit=(classification)(ilp)(feasibility),
      label={[yshift=-0.1cm]below:Scheduling Algorithm}] {};
\end{tikzpicture}%
}}
\caption{Overview of the proposed approach. We begin by using an ML-based oracle to predict jobs as early or tardy. Then, ILP is applied to refine some of these predictions. Finally, a feasibility framework generates a schedule based on the refined predictions.}
\label{fig:approach_overview}
\end{figure}

Figure~\ref{fig:approach_overview} presents an overview of our approach, where three interconnected components collaborate to achieve an optimal or near-optimal schedule. The ML model serves as the primary decision-making oracle in the initial step. In the next step, the least confident predictions are refined by solving to optimality a reduced, much smaller version of the original instance. Finally, a special algorithm is introduced to transform the sequence of predictions into a feasible solution.
We note that although we sometimes refer to the outputs of classifiers as probabilities, they are not meant to be calibrated. Instead, we interpret them as practical indicators of model confidence, i.e., \emph{prediction scores} used to guide decision-making in our scheduling heuristic.

\subsection{Predicting jobs labels}\label{subsec:classification_procedure}
We start by describing how the introduced ML-based oracle aids decision-making, while its actual implementation is discussed in Section~\ref{subsec:ml_models}.
Given a problem instance and its optimal solution $s^\ast$, the oracle $P_{apr}$ predicts whether a job $j \in N$ is \emph{early} or \emph{tardy} in $s^\ast$. 
The prediction outcomes are the class $c_j \in \{\textit{``early''}, \textit{``tardy''}\}$ assigned to job $j$ and the prediction score $Pr(j)$ indicating the confidence of the oracle in its decision.
This process is formalized in Algorithm~\ref{alg:Classify}.

\begin{algorithm}[ht]
\caption{\emph{Classify} function}\label{alg:Classify}
\begin{algorithmic}[1]
\Require set of jobs $N = \{1; 2; ... ; n\}$; \ oracle $P_{apr}$; \ threshold $\alpha \; (0 \leq \alpha \leq 1)$
\For {$j \in N$}
    \State $\textit{Pr}(j) \gets P_{apr}(j)$
    \State $c_j^{apr} \gets \textit{``early''} \;\; \textbf{if} \;\; \textit{Pr}(j) \geq \alpha \;\; \textbf{else} \;\; \textit{``tardy''}$
\EndFor
\State \Return $c_1^{apr}, ..., c_n^{apr}$; $\textit{Pr}(1), ..., \textit{Pr}(n)$
\end{algorithmic}
\end{algorithm}

\subsection{Refining predictions with ILP}\label{subsec:refine_predictions}
After executing Algorithm~\ref{alg:Classify} on a given problem instance, we obtain predicted classes \(c_j^{apr}\) and prediction scores \(\textit{Pr}(j)\) for each job \(j \in N\). However, scheduling the jobs immediately based solely on these predictions can be risky, as even one incorrect prediction may result in substantial deviations from the optimal solution. To mitigate this risk, we aim to refine the predictions.

The paper by~\cite{Baptiste2010} introduces the following theorem. 
Suppose we know whether a particular job $j \in N$ is classified as early ($D_j = d_j$) or tardy ($D_j = \tilde{d}_j$) in an optimal solution. We then formulate the \emph{reduced} problem on the set of jobs $N^\prime = N \setminus \{j\}$ with modified data as follows:

\begin{equation}
w_i^\prime = w_i, \;\; p_i^\prime = p_i \;\; (i \in N^\prime)
\end{equation}
\begin{equation}
d_i^\prime = 
\begin{cases} 
\min(d_i, D_j - p_j), & if \;\; d_i \leq D_j\\ 
d_i - p_j, & otherwise
\end{cases}
\;\;\;\;\; (i \in N^\prime) 
\end{equation}
\begin{equation}
\tilde{d}_i^\prime = 
\begin{cases} 
\min(\tilde{d}_i, D_j - p_j), & if \;\; \tilde{d}_i \leq D_j\\ 
\tilde{d}_i - p_j, & otherwise
\end{cases}
\;\;\;\;\; (i \in N^\prime)
\end{equation}

\begin{theorem}[Reduction theorem]
\label{the:reduction_theorem}
There exists a feasible schedule $s$ with an early set of jobs $E_s$ if and only if there exists a feasible schedule $s^\prime$ with an early set of jobs $E_s^\prime = E_s \setminus \{ j \} $ for the reduced problem.
\end{theorem}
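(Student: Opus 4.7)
The plan is to prove the two implications by exhibiting explicit, mutually inverse constructions: deletion of $j$ from $s$ on the forward side and insertion of $j$ at a slot of length $p_j$ ending at $D_j$ on the backward side. The piecewise formulas (2)--(3) for $d_i^\prime$ and $\tilde{d}_i^\prime$ are engineered precisely to absorb the effect of this insertion or deletion, so the whole argument reduces to a careful case analysis verifying that every deadline is preserved.

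For the forward direction, I would take any feasible $s$ with early set $E_s$ and let $s^\prime$ be $s$ with $j$ removed, keeping the relative order of the remaining jobs intact. For each $i \in N^\prime$ I would split on the position of $i$ relative to $j$. If $i$ precedes $j$, then $C_i^{s^\prime} = C_i^s$, and the no-idle assumption gives $C_i^s \leq C_j^s - p_j \leq D_j - p_j$; this matches the $\min(\cdot, D_j - p_j)$ clause and also dominates the $d_i - p_j$ clause whenever $d_i > D_j$, so $C_i^{s^\prime} \leq d_i^\prime$ and $C_i^{s^\prime} \leq \tilde{d}_i^\prime$ in their respective cases. If $i$ follows $j$, then $C_i^{s^\prime} = C_i^s - p_j$, which directly matches the $d_i - p_j$ (or $\tilde{d}_i - p_j$) clause; the $\min(\cdot, D_j - p_j)$ clause cannot arise for an early job placed after $j$, since $d_i \leq D_j < C_i^s$ would contradict feasibility of $s$.

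For the backward direction, given a feasible $s^\prime$ with early set $E_s^\prime = E_s \setminus \{j\}$, I would reconstruct $s$ by inserting $j$ right after the last job $\pi(k)$ whose completion time in $s^\prime$ satisfies $C_{\pi(k)}^{s^\prime} \leq D_j - p_j$ (or at the very beginning if no such job exists). Then $C_j^s = C_{\pi(k)}^{s^\prime} + p_j \leq D_j$, so $j$'s classification is respected. Jobs placed before $j$ retain their completion times and meet their original deadlines via the $\min(\cdot, D_j - p_j)$ branch of (2)--(3). Jobs placed after $j$ satisfy $C_i^{s^\prime} > D_j - p_j$, and combined with feasibility of $s^\prime$ this forces $d_i > D_j$ (respectively $\tilde{d}_i > D_j$), so the $d_i - p_j$ (respectively $\tilde{d}_i - p_j$) branch applies and shifting $C_i^{s^\prime}$ by $+p_j$ recovers exactly the original $d_i$ or $\tilde{d}_i$ bound.

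The main obstacle is entirely bookkeeping: aligning the four geometric situations (before or after $j$, crossed with due date or deadline $\leq D_j$ or $> D_j$) with the correct branch of the piecewise definitions, and confirming that early-set membership is preserved in both directions. A secondary subtlety arises in the backward direction when $D_j = \tilde{d}_j$: to ensure that $j$ ends up tardy rather than accidentally early in the reconstructed $s$, the insertion position may need to be pushed as late as permitted by $C_j^s \leq \tilde{d}_j$ instead of the first valid slot, which is always achievable thanks to the uniform $+p_j$ shift applied to the jobs placed after $j$.
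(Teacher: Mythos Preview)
The paper does not actually prove Theorem~\ref{the:reduction_theorem}: it merely quotes the result from \cite{Baptiste2010} and immediately moves on to use it (``We utilize this theorem to refine the predictions\ldots''). There is therefore no in-paper proof to compare your proposal against.

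On its own merits, your delete/insert construction is exactly the standard argument and is essentially correct. One small imprecision in the forward direction: your claim that the $\min(\cdot,D_j-p_j)$ branch ``cannot arise for an early job placed after $j$'' because ``$d_i\le D_j<C_i^s$ would contradict feasibility'' is not quite justified --- from $i$ following $j$ you only get $C_i^s>C_j^s$, and $C_j^s$ may be well below $D_j$, so $C_i^s>D_j$ need not hold. Fortunately the conclusion survives anyway: if $i$ is early and after $j$ with $d_i\le D_j$, then $C_i^{s'}=C_i^s-p_j\le d_i-p_j\le\min(d_i,D_j-p_j)=d_i'$, so the branch can arise but causes no trouble. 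Your backward direction and your identification of the tardy-$j$ subtlety are both on point.
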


We utilize this theorem to refine the predictions, reconsidering those of them which are the least confident according to the oracle. Initially, we select a set of jobs for which the oracle's predictions have the highest confidence, meaning the scores are close either to 0 or to 1. By applying the reduction theorem, these jobs are excluded from the instance, reducing the problem to only those jobs predicted with the smallest confidence (close to 0.5). The reduced instance can be solved using a general ILP solver, such as LINGO or Gurobi, and its optimal solution is then used to update the predictions for the original instance.

The ideas outlined above are formalized in Algorithm~\ref{alg:ILP}, referred to as the \texttt{Refine} function. 
We define the parameter \(\gamma \in \{0, \dots, n\}\) as the number of jobs to be handled by a general ILP solver.
Given the oracle outputs \((c_j^{\mathrm{apr}}, \Pr(j))\) from Algorithm~\ref{alg:Classify}, we sort the jobs in non-descending order of $|\Pr(j) - 0.5|$ and select the first \(\gamma\) jobs in the sorted list for refinement. The remaining jobs are assumed to be predicted reliably and are excluded from the instance using the reduction theorem, which results in a reduced problem instance. The ILP solver is then applied to the reduced instance (with a predefined time limit to guarantee quick termination). If a feasible solution \(s_1, \dots, s_\gamma\) is found, it replaces the corresponding predictions. Otherwise, the oracle's predictions remain unchanged.

\begin{algorithm}[ht]
\caption{\emph{Refine} function}\label{alg:ILP}
\begin{algorithmic}[1]
\Require set of jobs $N = \{1; 2; ... ; n\}$; \ $\gamma \in \mathbb{N} \ (0 \leq \gamma \leq n)$; \ time limit $\beta$ 
\Require predicted classes $c_1, ..., c_n$; predicted probabilities $Pr(j)$, ..., $Pr(j)$
\State $(j_1, ..., j_n) \gets Sort(N, |Pr(j) - 0.5|)$ \Comment{sort by $|Pr(j) - 0.5|$ non-desc.}
\State $N^\prime = N \setminus \{j_{\gamma+1}, ..., j_n\}$ \Comment{reduce the original instance}
\State $(s_1, ..., s_\gamma, \nu) \gets \textit{ILP}(N^\prime, \ \beta)$ \Comment{solve by ILP with time limit}
\State $(c_{j_1}^{ref}, ..., c_{j_\gamma}^{ref}) \gets (s_1, ..., s_\gamma) \;\; \textbf{if} \ \nu = \textit{``Optimal Solution Found''}$
\State \Return $c_1^{ref}, ..., c_n^{ref}$ \Comment{update predicted classes if a solution was found}
\end{algorithmic}
\end{algorithm}

\begin{algorithm}[ht]
\caption{Scheduling algorithm}\label{alg:SchedAlgo}
\begin{algorithmic}[1]
\Require set of jobs $N = \{1; 2; ... ; n\}$; \ predicted classes $c_1^{ref}, ..., c_n^{ref}$
\State \Return $\emptyset$ \;$\textbf{if} \;\; \text{EDF}(N) = \textit{``infeasible''}$
\State $D_i \gets d_i \;\; \textbf{if} \;\; c_i^{ref} = \textit{``early''} \;\; \textbf{else} \;\; \tilde{d}_i$ \;$(i\in N)$
\State $s \gets Sort(N, D_j)$ \Comment{jobs ordered by $D_i$ ascending}
\State $S \gets \emptyset$ \Comment{a set of scheduled jobs $S$}
\State $m \gets 1$ \Comment{a cursor $m$}
\While{$m \leq n$}\label{alg2while}
    \State $j \gets s(m)$ \Comment{consider the $m$-th job $j$ from $s$}
    \If{$c_j^{ref} = \textit{``early''}$} \Comment{if it is predicted as early}
        \State $\alpha = \textit{EDF}(N \setminus (S \cup \{j\}))$ \Comment{could we schedule the rest by \emph{EDF}}\label{EdfCheck}
        \State$\textit{schedNow} \gets true \;\; \textbf{if} \;\; \alpha = \textit{``feasible''} \;\; \textbf{else} \;\; false$
    \Else
        \State $\textit{schedNow}\gets\textit{true}$
    \EndIf
    \If{$\textit{schedNow}$} \Comment{schedule if \emph{early} \& passes \emph{EDF} or if \emph{tardy}}
        \State $S \gets S \cup \{j\}$ 
        \State $m \gets m + 1$
    \Else \Comment{otherwise, put $j$ further in $s$}
        \State $D_j \gets \tilde{d}_j$
        \State $s \gets Push(j, s)$ \Comment{new jobs order, $j$ is placed by $D_j = \tilde{d}_j$}
    \EndIf
\EndWhile
\State \Return $s$
\end{algorithmic}
\end{algorithm}

\subsection{Scheduling Algorithm.}\label{subsec:feas_algo}
Assume that we have executed Algorithm~\ref{alg:Classify} followed by Algorithm~\ref{alg:ILP}, obtaining the predicted classes \(c_1^{ref}, \dots, c_n^{ref}\). However, this sequence of predictions does not \emph{guarantee} a feasible schedule. To ensure feasibility and to convert the predictions into a valid schedule, an additional step is required.
Further on, we use the fact that a given problem is feasible \emph{if and only if} scheduling jobs in non-descending order of their deadlines yields a feasible solution~\citep{Pinedo2012}.
We refer to this check as the \emph{EDF check} (\emph{Earliest Deadline First check}).

Algorithm~\ref{alg:SchedAlgo} outlines the scheduling procedure. We begin by checking whether the instance is feasible. If so, we sort the jobs by their values \(D_i\), where \(D_i = d_i\) if job \(i\) is predicted as early, and \(D_i = \tilde{d}_i\) otherwise. This defines an initial permutation of jobs, denoted as \(s\).
We initialize a cursor \(m\) at the start of \(s\) and begin with the first job $j$ in $s$. If the current job \(j\) is predicted as tardy, we schedule it immediately and move to the next one (lines 11--12, 14--16). If \(j\) is predicted as early, we check whether the remaining unscheduled jobs can still be scheduled using an EDF check (line 9). If the check is passed, we schedule \(j\) and continue. Otherwise, we change its predicted class to tardy, update \(D_j\) to \(\tilde{d}_j\), and reinsert \(j\) into \(s\) so that the list remains sorted (lines 17--19).
The entire procedure continues until all jobs are scheduled. The final schedule corresponds to the updated order in \(s\), with the cursor positioned after the last job.

We remark that the EDF check can be efficiently implemented. First, we sort all jobs only once at the beginning of Algorithm~\ref{alg:SchedAlgo} according to their deadlines $\tilde{d}_i$. Then during each EDF check we go through this pre-sorted list, considering only the jobs that remain unscheduled. For each job, we verify whether adding its duration to the current total length of the schedule would still meet its deadline. If so, we add the job’s duration to the total length and proceed to the next job; otherwise, the check fails. As for the practical impact, we note that while EDF checks dominate the runtime of the rule-based heuristic, the overall time spent on them remains negligible (see Section~\ref{sec:Experimental Results} for details).

\begin{proposition}
Algorithm~\ref{alg:SchedAlgo} finds a feasible schedule if one exists.
\end{proposition}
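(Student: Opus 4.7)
The plan is to maintain the loop invariant: at the start of each iteration of the while loop, the still-unscheduled set $N \setminus S$ admits a feasible EDF schedule starting after time $\sum_{i \in S} p_i$. If this invariant is preserved and the loop terminates with $S = N$, the returned $s$ is automatically a feasible schedule of $N$. The base case follows from line~1: by the EDF characterization cited from~\cite{Pinedo2012}, a feasible schedule of $N$ exists iff EDF on $N$ is feasible, which is exactly the condition for entering the loop (otherwise the algorithm returns $\emptyset$ and there is nothing to prove).

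For the inductive step I would case-split on which branch the iteration takes. When $j$ is predicted early and $\textit{schedNow} = \textit{true}$, the invariant is direct: line~\ref{EdfCheck} has explicitly verified EDF-feasibility of $N \setminus (S \cup \{j\})$ after $S \cup \{j\}$. When the else-branch fires (early job failing the EDF check), $S$ is unchanged, so the invariant is preserved trivially by the pushed permutation being irrelevant to the invariant's statement.

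The main obstacle, I expect, is the tardy branch, where $j$ is scheduled without any EDF check. Here the key observation to exploit is that $s$ is sorted in non-descending order of $D_i$, so the current $j$ has the smallest $D_i$ among unscheduled jobs, and since $c_j^{ref} = \textit{``tardy''}$ we have $D_j = \tilde{d}_j$. For every other remaining job $i$, either $i$ is predicted early and $\tilde{d}_i \geq d_i = D_i \geq \tilde{d}_j$, or $i$ is predicted tardy and $\tilde{d}_i = D_i \geq \tilde{d}_j$; in both cases $\tilde{d}_i \geq \tilde{d}_j$. Thus $j$ has the minimum deadline in $N \setminus S$, so EDF on $N \setminus S$ schedules $j$ first. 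By the inductive hypothesis this EDF schedule is feasible, and dropping its first element yields a feasible EDF schedule of $N \setminus (S \cup \{j\})$ starting at time $\sum_{i \in S \cup \{j\}} p_i$, which preserves the invariant.

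It remains to check termination. Each execution of the else-branch on a job $j$ sets $D_j := \tilde{d}_j$ and moves $j$ later in $s$; when $j$ is revisited the same deadline-minimality argument as in the tardy branch applies, so the EDF check on line~\ref{EdfCheck} succeeds (or, equivalently under the updated $D_j$, $j$ is handled as tardy) and $j$ is scheduled. Consequently no job is pushed twice, the loop terminates in at most $2n$ iterations with $S = N$, and the invariant then delivers the feasible schedule.
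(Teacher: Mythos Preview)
Your proof is correct and follows essentially the same approach as the paper's: both reduce to showing that each scheduling step preserves EDF-feasibility of the remaining jobs, with the key observation in the tardy case being that the current job $j$ has the minimum deadline among all unscheduled jobs (since $D_j=\tilde d_j\le D_i\le \tilde d_i$ for every remaining $i$). Your version is more explicit---stating the loop invariant formally and folding the $2n$ termination bound into the proof---whereas the paper leaves termination to a separate remark.
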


\begin{proof}
We must demonstrate that scheduling a job \( j \) allows to schedule all remaining jobs without violating their deadlines. We proceed by considering two mutually exclusive cases based on the predicted class of job \( j \) and the outcome of the EDF check.

\medskip

\noindent 
Case 1: Job \( j \) has an early predicted class. Then, if it passes the EDF check, feasibility is trivially preserved. If the EDF check fails, scheduling of $j$ is postponed, leaving the feasibility of the remaining jobs unchanged.

\medskip

\noindent 
Case 2: Job \( j \) has a tardy predicted class. 
Two observations can be made in advance.  
First, the jobs in \( s \) are always kept sorted, so the sorting key \( D_j \) of job \( j \) is always the smallest value of \( D \) among the remaining unscheduled jobs.  
Second, since \( j \) is predicted as tardy, \( D_j = \tilde{d}_j \).  
Therefore, scheduling \( j \) is identical to the very first step of scheduling all the remaining jobs by the \emph{EDF} rule, and the remaining unscheduled jobs can be scheduled by running the \emph{EDF} until the end.  
Hence, the ability to construct a feasible schedule is preserved.  
This completes the proof.
\end{proof}

\begin{remark}
The algorithm terminates in at most \(2n\) steps, as each job is handled at most twice: once during initial evaluation and once when revisited.
\end{remark}

\section{Machine Learning Methodology}\label{sec:ML Methodology}

In the previous section, we introduced the concept of a decision oracle in a general manner. 
Here, we delve into the practical aspects of its implementation.
When applying machine learning to a problem, two main challenges arise. 
The first is to identify relevant problem features, so an ML model can generalize well across different distributions. Considering many types of ML models, the second challenge is to select the one that balances prediction accuracy, training time, and inference speed for the given problem.
We discuss each of these topics below.
Further on, $\textbf{x}$ denotes the vector $(x_1, \dots, x_n)$, and abbreviations $avg$ and $std$ stand for ``average'' and ``standard deviation'' respectively.
We also assume the natural logarithm $\ln\textbf{x}$ is applied component-wise.

\subsection{Developing robust features}\label{subsec:developing_robust_features}
We associate each job $j$ with an eight-dimensional vector of parameters, which includes weight $w_j$, duration $p_j$, due date $d_j$, deadline $\tilde{d_j}$, and four derived parameters: $\frac{w_j}{p_j}$, $w_j - p_j$, $\frac{d_j}{\tilde{d_j}}$, $\tilde{d_j} - d_j$.
While these parameters could be directly used as features in a machine learning model, we found a more effective featurization approach.
Let $x_j$ denote any of the eight mentioned above parameters of a job $j$, i.e., $x_j \in \{w_j, p_j, d_j, \tilde{d_j}, \frac{w_j}{p_j}, \frac{d_j}{\tilde{d_j}}, w_j - p_j, \tilde{d_j} - d_j\}$.
We propose two types of features.
The idea behind the first type is to consider the deviation of the parameter from the average value:
\begin{equation}\label{eq:abs_difference}
x_j^{\textit{dev}} = \frac{x_j - \textit{avg}(\textbf{x})}{\textit{std}(\textbf{x})}.
\end{equation}

In the machine learning community, this technique is referred to as calculating the z-score. Notably, using it is rather a way to express how a particular \emph{job parameter} relates to an \emph{aggregated value} obtained across all jobs. 
Preliminary experiments have shown that considering features in the form of Equation~(\ref{eq:abs_difference}) significantly improves prediction accuracy compared to the raw parameter values $x_j$.

The idea of another type of features we use is connected to a relative difference in the logarithmic scale:
\begin{equation}\label{eq:rel_difference}
    x_j^{\textit{rel}} = \frac{\ln(x_j) - \textit{avg}(\ln\textbf{x})}{\textit{std}(\ln\textbf{x})}.
\end{equation}

The meaning of this definition can become clearer if we consider the following expression:
\begin{align}
    x_j^{\textit{rel}}
    = \frac{\ln(x_j) - \textit{avg}(\ln\textbf{x})}{\textit{std}(\ln\textbf{x})}
    = \frac{\ln\left(\frac{x_j}{e^{\textit{avg}(\ln\textbf{x})}}\right)}{\ln(e^{\textit{std}(\ln\textbf{x})})}
    = \frac{1}{\textit{std}(\ln\textbf{x})} \ln\left(\frac{x_j}{e^{\textit{avg}(\ln\textbf{x})}}\right),
\end{align}
which implies:
\begin{align}\label{eq:rel_diff_prop}
    x_j^{\textit{rel}}
    \sim \ln\left(\frac{x_j}{e^{\textit{avg}(\ln\textbf{x})}}\right)
    = \ln(x_j) - \textit{avg}(\ln\textbf{x}).
\end{align}
Similarly to \( x_j^{\textit{dev}} \), we aim to consider \( x_j \) in relation to an aggregate function over \( \mathbf{x} \), that is, a function defined on the entire sample.
Modeling such a relation as a fraction may result in extremely high or low values of the features, which can negatively affect the model accuracy during training.
Therefore, we consider the natural logarithm, which transforms the ratio into a difference~\eqref{eq:rel_diff_prop} and ensures better numerical stability. 

Consequently, \emph{each job is represented by sixteen features}: eight of them follow the form of Equation \eqref{eq:abs_difference} and another eight align with Equation \eqref{eq:rel_difference}.
Importantly, these features reflect the combinatorial nature of the scheduling problem, as each job parameter is considered in relation to the value from all the jobs in the instance. 
Further discussion on this topic is provided in the next subsection.

\subsection{ML model development}\label{subsec:ml_models}
In Section~\ref{subsec:classification_procedure}, we use a decision-making oracle to predict jobs as early or tardy.
Here, we propose several ML models for implementing the oracle.

As it is impractical to examine all known ML models, we need a systematic approach to select a representative subset of models for further comparison.
Our idea comes from the nature of the addressed scheduling problem: we consider how much a job's classification depends on information about other jobs in the instance.
We define a model as \emph{locally informed} if its predictions are based solely on features associated with the job under consideration. These features may include aggregate statistics derived from the instance but do not rely on direct access to other jobs' features during inference. In contrast, we define a model as \emph{globally informed} if it explicitly utilizes features of multiple or all jobs in the instance (e.g., through attention mechanisms or concatenation of feature vectors).
Thus, our guiding principle for creating a sample of diverse ML models is how much information about the other jobs they use to make a decision.

Below, we examine two globally informed classification methods. The first relies on conditional probability estimates, which only partially incorporate information from other jobs. 
The second uses attention mechanisms, considering information from all jobs in the instance.
We then turn to locally informed classification, focusing on AutoML tools and a particularly effective perceptron model.
All the approaches are thoroughly compared in Section~\ref{subsec:training}.

\subsubsection{Globally informed decisions based on conditional probabilities}\label{subsubsec:cond_probs}

Certain theoretical statements about the $1|\tilde{d}_i|\sum w_i U_i$ problem can help to develop a context for decision-making.
Consider the following theorem, presented by~\cite{Baptiste2010}:

\begin{theorem}[Dominance rule]\label{the:dominance_theorem}
Consider two jobs, $i$ and $j$, satisfying the conditions $w_j > w_i$, $p_j \leq p_i$, $d_j \geq d_i$, and $\Tilde{d}_j \leq \Tilde{d}_i$.
Let $s^\ast$ denote an optimal schedule.
Then, if job $i$ is scheduled early in $s^\ast$, the same holds for job $j$; similarly, if job $j$ is tardy in $s^\ast$, then job $i$ is also tardy.
\end{theorem}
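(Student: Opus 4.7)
My plan is to prove the statement by an exchange argument. First I would observe that the two implications are logical contrapositives of one another: in any feasible schedule $E_s \cup T_s = N$, so ``not early'' coincides with ``tardy''. It therefore suffices to prove one direction, say: if $i$ is early in $s^\ast$ then $j$ is also early. I will do this by contradiction, assuming $i \in E_{s^\ast}$ and $j \in T_{s^\ast}$.

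Next I would pin down the order of $i$ and $j$ in $s^\ast$. Since $j$ is tardy, $C_j^{s^\ast} > d_j \geq d_i$, so if $j$ were processed before $i$ we would get $C_i^{s^\ast} > C_j^{s^\ast} > d_i$, contradicting $i \in E_{s^\ast}$. Hence $i$ precedes $j$ in the sequence. I would then construct $s'$ by swapping the positions of $i$ and $j$ while keeping every other job in place, and partition the remaining jobs into $A$ (before $i$'s slot), $B$ (strictly between $i$ and $j$) and $C$ (after $j$'s slot). The bookkeeping step is to notice that in $s'$ the jobs in $A$ and $C$ keep their completion times, $j$ in $i$'s slot now finishes at $C_i^{s^\ast} - p_i + p_j$, the jobs in $B$ all shift earlier by $p_i - p_j \geq 0$, and $i$ in $j$'s slot finishes at exactly $C_j^{s^\ast}$.

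From here, feasibility and strict improvement are one-line checks, each consuming one of the four hypotheses. The inequalities $p_j \leq p_i$ and $d_j \geq d_i$ give $C_j^{s'} \leq C_i^{s^\ast} \leq d_i \leq d_j$, so $j$ is early in $s'$. The inequality $\tilde{d}_j \leq \tilde{d}_i$ gives $C_i^{s'} = C_j^{s^\ast} \leq \tilde{d}_j \leq \tilde{d}_i$, so $i$ still meets its deadline, and the jobs in $B$ only finish sooner, so all their previously satisfied constraints remain satisfied. In the objective, $j$ switches from tardy to early and $i$ from early to tardy while no other previously early job becomes tardy, yielding $f(s') \geq f(s^\ast) + w_j - w_i$, which by $w_j > w_i$ strictly exceeds $f(s^\ast)$ and contradicts the optimality of $s^\ast$. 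The main obstacle, and really the whole content of the argument, is lining up each of the four dominance inequalities with the right completion-time check: three of them deliver feasibility of $s'$, while the fourth, $w_j > w_i$, is what turns ``at least as good'' into ``strictly better''.
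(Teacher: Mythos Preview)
Your exchange argument is correct and is the standard way to establish a dominance rule of this shape. Note, however, that the paper itself does not supply a proof of this theorem: it is stated as a result taken from Baptiste et al.\ (2010), with no argument given in the present text. There is therefore no in-paper proof to compare against. For what it is worth, the proof you wrote is sound---each of the four hypotheses $p_j \le p_i$, $d_j \ge d_i$, $\tilde d_j \le \tilde d_i$, $w_j > w_i$ is invoked exactly once, in the right place, to secure feasibility of the swapped schedule and then strict improvement of the objective.
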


An important conclusion of this theorem is that deciding on one job can impact the scheduling of another. 
In practice, many pairs of jobs follow the dominance rule pattern even if a few inequalities are violated.
Thus, our idea is to express the dominance rule in terms of probability.
Consider two mutually exclusive events that a job $j$ is early ($j_\mathcal{E}$) or tardy ($j_\mathcal{T}$) in a fixed optimal solution.
Given another job $i$, we refer to the marginal probability:
\begin{equation}\label{eq:marginal_prob}
\textit{Pr}(j_\mathcal{E}, i) = \textit{Pr}(j_\mathcal{E} \mid i_\mathcal{E}) \ \textit{Pr}(i_\mathcal{E}) + \textit{Pr}(j_\mathcal{E} \mid i_\mathcal{T}) \ \textit{Pr}(i_\mathcal{T}).
\end{equation}

Two distinct perceptrons can be used to predict the values on the right-hand side.
The first computes the apriori estimates $\textit{Pr}(i_\mathcal{E})$ and $\textit{Pr}(i_\mathcal{T})$, while the second handles conditioned terms $\textit{Pr}(j_\mathcal{E} \mid i_\mathcal{E})$ and $\textit{Pr}(j_\mathcal{E} \mid i_\mathcal{T})$.
A decision regarding job $j$ is based on a rounded average of various marginal estimates $\textit{Pr}(j_\mathcal{E}, i)$, calculated with respect to different jobs $i$.

Such an approach to decision-making has several advantages. 
At first, instead of only focusing on job $j$, we also consider other jobs in the instance, which can improve prediction accuracy. 
Secondly, our approach balances apriori and conditional probability estimates, as they come from independent oracles. 
Lastly, the proposed way of decision-making can be combined with Theorem~\ref{the:dominance_theorem}, which eliminates the need to estimate conditional probabilities in certain cases.

It might seem useful to decide on $j$ based on several jobs.
However, extending the proposed approach to incorporate multiple jobs faces significant challenges.
Indeed, a decision based on two jobs would require already four terms on the right-hand side of the Equation~\eqref{eq:marginal_prob}, while generally, the number of terms grows exponentially.
In addition, one must also train an exponentially rising number of oracles.

\subsubsection{Globally informed decisions provided by attention}
Maximizing prediction accuracy can be taken to the extreme if a decision about some job is based on the context of all other jobs in the instance.
This concept is effectively addressed by a neural network with \emph{attention layer}, which can learn an aggregation function over a set or sequence of items \citep{Attention2017}.

The attention layer transforms a sequence of feature vectors $h(1), \dots, h(n)$ into another sequence $h^\prime(1), \dots, h^\prime(n)$. 
The key property of this transformation is that each vector $h^\prime(j)$ can potentially contain information from all feature vectors $h(1), \dots, h(n)$. 
The term \emph{attention} highlights the ability to focus on relevant elements in the sequence while ignoring others. 
To compute $h^\prime(j)$, we apply learned linear transformations to all vectors $h(1), \dots, h(n)$ to obtain queries, keys, and values (see details in~\cite{Attention2017}). 
Then, $h^\prime(j)$ is computed as a weighted sum of the value vectors, where the weights depend on the similarity (e.g., dot product) between the query for job~$j$ and the keys of all jobs.
These weights are learned from data.
We remark that an attention layer is inherently permutation-invariant, as it aggregates information based on learned weights rather than on fixed order of input. Since jobs in $N$ have no predefined order, the attention model naturally processes jobs independently of their order in the input sequence.

In our scenario, the sequence passed to the attention layer corresponds to the sequence of feature vectors $h(j)$, $j \in N$. 
Consequently, we anticipate that the attention layer can learn to produce a new feature vector $h^\prime(j)$ by extracting the relevant information from other jobs in the instance. 
As it is typically done in ML, we use the attention layer as part of a larger model, which interleaves two attention layers with 2-layer MLPs with 80 neurons per layer and ReLU activation function.
Once the output from the final attention layer is obtained for a specific job, a linear classification layer predicts the job as early or tardy, which is done simultaneously for all the jobs in the instance.
The classification loss is then used to update the entire model. 

The main drawback of this model is the quadratic complexity of computing the weighting scores with respect to the sequence length. Therefore, the model could be significantly slower than the other approaches. 
On the other hand, we get all predictions in a single forward pass, unlike the approach based on conditional probabilities described earlier, where we have to repeat the inference process for every job.

\subsubsection{Locally informed classification with AutoML}
Selecting the best ML model can be challenging due to the abundance of available algorithms and the need to fine-tune their hyperparameters.
Automated Machine Learning (AutoML) is a framework designed to address exactly this challenge. 
It comprises various techniques to automate the entire model development process: data preprocessing, features engineering, model selection, and hyperparameter tuning.
Using AutoML not only reduces the effort in model development but also provides high-quality baseline models.  
In this study, we consider three AutoML frameworks: TabPFN, AutoGluon, and AutoSklearn.
The development of the ML model is analogous across all three tools.
Given a training dataset, the AutoML framework identifies the best-fitting ML model.

\emph{TabPFN} is a transformer model trained on synthetic data to emulate real-world tabular datasets~\citep{Hollman2023}.
It is designed for supervised classification, achieving state-of-the-art performance: the authors report that an analysis of 18 small numerical datasets shows the superiority of TabPFN over individual base-level classification algorithms and its competitive performance with leading AutoML frameworks in significantly less time.
As our training data can be expressed in a tabular form, using \emph{TabPFN} is highly relevant, especially since it can provide knowledge about the baseline accuracy we can achieve.
The only drawback of this model is that it can currently be trained only on small tabular datasets (1000 training examples, 100 numerical features, and 10 classes).

\emph{AutoGluon} is an AutoML framework designed specifically for tabular data \citep{Erickson2020}.
It includes simple algorithms for data preprocessing, four types of feature engineering approaches, and a wide range of models for tabular predictions, such as KNN, neural networks, LightGBM trees, random forests, and XGBoost. 
It automates exploring model architectures and hyperparameter spaces by incorporating Bayesian optimization and neural architecture search. 
By using AutoGluon in the context of our problem, we explore more than a dozen different ML models and benefit from Bayesian optimization for hyperparameter tuning, attaining results comparable to their manual exploration.

\emph{AutoSklearn} is another advanced AutoML framework tailored for automating model selection, hyperparameter optimization, and feature preprocessing~\citep{Feurer2020}.
It extends the widely used scikit-learn library~\citep{ScikitLearn2011} by integrating Bayesian optimization, automated ensemble construction, and meta-learning techniques to efficiently navigate the space of machine learning pipelines.
In contrast to AutoGluon, AutoSklearn places a strong emphasis on model ensembling and warm-starting hyperparameter optimization using prior knowledge from related tasks.
In our study, employing AutoSklearn enables an independent exploration of model configurations, providing additional diversity in candidate models and serving as a valuable cross-check against solutions found by other frameworks.

\subsubsection{Multilayer perceptron architecture}
We highlight a particularly useful model configuration: a multilayer perceptron (MLP) with specific hyperparameters. 
The model begins with an input layer of 16 neurons, reflecting the dimension of feature vectors. 
It is followed by two hidden layers, each containing 80 neurons -- a choice based on a preliminary experiment comparing several common sizes (16, 40, 80, 120).
We found that 80 neurons offer a good trade-off between predictive accuracy and inference time across datasets, though the differences across tested configurations were within a few fractions of a percent.
The final output layer contains two neurons representing scores for early and tardy job categories. 
While having a single output neuron is also possible, we found that using two outputs is more convenient when applying cross-entropy as the loss function.
We use Rectified Linear Unit (ReLU) as a standard activation function, widely adopted in neural network models.

\section{Experimental Results}\label{sec:Experimental Results}

In this section, we first provide an overview of the datasets utilized in our experiments. 
Subsequently, we describe the conducted experiments, focusing on model selection and comparison with state-of-the-art methods.
The proposed algorithm is implemented in Python using the PyTorch library for neural network development. 
The training labels are obtained on a cluster node (18 Cores/CPU; 2.3GHz; 256 GB RAM), while the tests are performed in Google Colab. 
The code and data are available at \href{https://doi.org/10.5281/zenodo.17233362}{https://doi.org/10.5281/zenodo.17233362}
.

\subsection{Datasets description}\label{subsec:datasets}

\begin{table}[!htbp]
    \scriptsize
    \centering
    \begin{tabular}{| p{.06\textwidth} | p{.27\textwidth} | p{.56\textwidth} | }
    \hline
    \textbf{ID} 
    & \textbf{Generation Procedure}
    & \textbf{Description} \\
    \hline
    1
    &
    $w_i \sim \mathcal{N}(50, 20)$
    \newline
    $p_i \sim \mathcal{U}(1, 100)$
    \newline
    $d_i \sim \mathcal{U}(0.3\cdot\sum{p_i}, 0.7\cdot\sum{p_i})$
    \newline
    $\tilde{d_i} \sim \mathcal{U}(d_i, 1.1\cdot\sum{p_i})$
    &
    The dataset illustrates a scheduling scenario where processing each job yields almost equal benefits, yet there is considerable variation in how difficult each task is to complete.
    \\
    \hline
    2
    &
    $w_i \sim \mathcal{U}(30, 80)$
    \newline
    $p_i \sim \mathcal{N}(50, 10)$
    \newline
    $d_i \sim \mathcal{N}(0.5\cdot\sum{p_i}, 0.1\cdot\sum{p_i})$
    \newline
    $\tilde{d_i} \sim \mathcal{U}(d_i, 1.2\cdot\sum{p_i})$
    &
    The dataset represents a scheduling scenario, where jobs have diverse profits, while their durations are close to each other with a slight variation \citep{Novak2022}.
    \\
    \hline
    3
    &
    $w_i = 2 \cdot p_i + 20$
    \newline
    $p_i \sim \mathcal{N}(40, 15)$
    \newline
    $d_i \sim \mathcal{U}(0.3 \sum{p_i}, 0.7\cdot\sum{p_i})$
    \newline
    $\tilde{d_i} \sim \mathcal{U}(d_i, 1.1\cdot\sum{p_i})$
    &
    The dataset shows a direct correlation between job weight and processing time, reflecting situations where more beneficial tasks require proportionally longer time \citep{Hejl2022}.
    \\
    \hline
    4
    &
    $w_i = p_i^2 + 10$
    \newline
    $p_i \sim \mathcal{N}(35, 10)$
    \newline
    $d_i \sim \mathcal{U}(0.3\cdot\sum{p_i}, 0.7\cdot\sum{p_i})$
    \newline
    $\tilde{d_i} \sim \mathcal{U}(d_i, 1.1\cdot\sum{p_i})$
    &
    The dataset introduces a quadratic correlation between job weight and duration. It is relevant for industries where the weight (profit or cost) of a job increases rapidly alongside its processing time.
    \\
    \hline
    5
    &
    $w_i \sim \mathcal{U}(20, 80)$
    \newline
    $p_i \sim \mathcal{N}(45, 15)$
    \newline
    $d_i \sim \mathcal{U}(0.5\cdot\sum{p_i}, 0.8\cdot\sum{p_i})$
    \newline
    $\tilde{d_i} = d_i + \frac{n}{5} \cdot w_i$
    &
    In this dataset the weight of a job is uniformly distributed, and there is a linear correlation between the weight and deadline. It represents scenarios where valuable jobs require more time to be completed.
    \\
    \hline
    6
    &
    $w_i = 100 / (p_i + 1)$
    \newline
    $p_i \sim \mathcal{N}(40, 10)$
    \newline
    $d_i \sim \mathcal{U}(0.3\cdot\sum{p_i}, 0.7\cdot\sum{p_i})$
    \newline
    $\tilde{d_i} \sim \mathcal{U}(d_i, 1.1\cdot\sum{p_i})$
    &
    The dataset introduces a non-linear inverse correlation between job weight and processing time. It is relevant for scenarios where easier tasks have a higher priority.
    \\
    \hline
    7
    &
    $w_i \sim \mathcal{U}(10, 60)$
    \newline
    $p_i \sim \emph{Exp}(30)$
    \newline
    $d_i \sim \mathcal{U}(0.3\cdot\sum{p_i}, 0.7\cdot\sum{p_i})$
    \newline
    $\tilde{d_i} \sim \mathcal{U}(d_i, 1.1\cdot\sum{p_i})$
    &
    The dataset represents a scenario where job weights vary uniformly and durations follow an exponential distribution.
    It is relevant for tasks with inherent variability in processing times \citep{Kaandorp2007}.
    \\
    \hline
    8
    &
    $w_i \sim \mathcal{LN}(3, 1)$
    \newline
    $p_i \sim \mathcal{LN}(4, 1)$
    \newline
    $d_i \sim \mathcal{U}(0.3\cdot\sum{p_i}, 0.7\cdot\sum{p_i})$
    \newline
    $\tilde{d_i} \sim \mathcal{U}(d_i, 1.1\cdot\sum{p_i})$
    &
    The dataset uses log-normal distributions for weight and processing time, which can be relevant for modeling real-world scenarios where certain jobs have highly skewed distributions \citep{Lee2024}.
    \\
    \hline
    9
    &
    $w_i = 1.5 \cdot p_i + 0.2 \cdot d_i$
    \newline
    $p_i \sim \mathcal{N}(40, 10)$
    \newline
    $d_i \sim \mathcal{U}(0.3\cdot\sum{p_i}, 0.7\cdot\sum{p_i})$
    \newline
    $\tilde{d_i} \sim \mathcal{U}(d_i, 1.1\cdot\sum{p_i})$
    &
    The dataset combines linear correlations between weight and processing time and weight and deadline. It can be relevant for scenarios where both longer duration and due date influence the profit.
    \\
    \hline
    10
    &
    $w_i \sim \mathcal{LN}(4, 2)$
    \newline
    $p_i \sim \emph{Exp}(40)$
    \newline
    $d_i \sim \mathcal{N}(0.5\cdot\sum{p_i}, 100)$
    \newline
    $\tilde{d_i} \sim \mathcal{N}(2 \cdot d_i, 200)$
    &
    The dataset presents an edge case with a skewed distribution of weights and significant variability in both processing times and due dates. It is useful to explore the robustness of a scheduling approach.
    \\
    \hline
    11-15
    &
    $w_i \sim \mathcal{U}(1, 100)$
    \newline
    $p_i \sim \mathcal{U}(1, 100)$
    \newline
    $d_i \sim \mathcal{U}(a\cdot\sum{p_i}, b\cdot\sum{p_i})$
    \newline
    $\tilde{d_i} \sim \mathcal{U}(d_i, 1.1\cdot\sum{p_i})$
    \newline
    $(a, b) \in \{(0.1, 0.3), (0.1, 0.7),$
    \newline 
    $(0.3, 0.5), (0.3, 0.7), (0.5, 0.7)\}$
    &
    These five datasets, each defined by a different $(a, b)$ pair, are intended to make a fair comparison with the approaches in \citep{Baptiste2010}. They offer variations in uniform distributions for job weights, processing times, and due dates, allowing to assess the impact of parameterized due dates on a scheduling algorithm.
    \\
    \hline
\end{tabular}
    \caption{Datasets used in the experiments}
    \label{tab:datasets}
\end{table}

Traditional evaluation methods in operational research, like those demonstrated in \citep{Baptiste2010, Hejl2022}, test the performance of a scheduling algorithm based on \emph{uniform distribution} $\mathcal{U}(a, b)$ of job parameters. 
However, our study broadens this scope by considering a variety of distributions reflecting real-world scenarios.
For instance, \cite{Novak2022} justifies the consideration of \emph{normal distribution} $\mathcal{N}(\mu, \sigma)$, reflecting scenarios such as production stages with human workers facing uncertain assembly times. 
Similarly, \cite{Xu2020} utilize normal distribution to model uncertainties in renewable energy generation, particularly wind power.
Other papers, such as \citep{Wang1999} and \citep{Kaandorp2007}, focus on \emph{exponentially distributed} $Exp(\lambda)$ service times. 
\cite{Lee2024} explore sequencing rules for various service-time distributions, including non-identical exponential and log-normal distributions, relevant in healthcare modeling. \cite{Ren2023} notes the prevalence of \emph{log-normal distribution} $\mathcal{LN}(\mu, \sigma)$ in the one-day travel mileage of electric private cars.

Motivated by this prior research utilizing various distributions, we create diverse datasets to comprehensively evaluate our algorithm under realistic conditions. 
Table~\ref{tab:datasets} defines the generation procedure for every dataset and provides an intuition on its relevance to real-world situations.
For each dataset, we generate multiple instances comprising from 500 to 5000 jobs and solve them using the exact algorithm proposed by~\cite{Baptiste2010}.
Job labels (early/tardy) are determined based on their status in the optimal solution, serving as the ground truth for training ML models (see Section~\ref{subsec:training}).  
The optimal values are used as benchmarks for evaluating our approach compared to the state-of-the-art (see Section~\ref{subsec:results}).

\subsection{Experiments with ML models}\label{subsec:training}
In these experimental series, we compare multiple machine learning architectures discussed in Section~\ref{subsec:ml_models}. 
The results are detailed in Table~\ref{tab:ml_models}.

\begin{table*}[t]
    \centering
    \footnotesize
    \begin{tabular}{lcccccc}
    \toprule
    \multirow{2}{*}{\textbf{Dataset}} & \multicolumn{4}{c}{\textbf{Locally informed}} & \multicolumn{2}{c}{\textbf{Globally informed}} \\
    \cmidrule(lr){2-5} \cmidrule(lr){6-7}
    & MLP & AutoGluon & AutoSklearn & TabPFN & CondProbs & Attn \\
    \midrule
    \multicolumn{7}{c}{\textbf{Validation Accuracy, [\%]}} \\
    \midrule
    1  & 98.3  & 98.5  & 98.5  & 98.1  & 98.3  & \textbf{98.7} \\
    2  & 97.1  & \textbf{97.7}  & \textbf{97.7}  & 97.2  & 97.1  & \textbf{97.7} \\
    3  & 97.6  & \textbf{98.0}  & 97.8  & 96.7  & 97.6  & 97.7 \\
    4  & 97.8  & \textbf{98.3}  & \textbf{98.3}  & 97.2  & 97.8  & \textbf{98.3} \\
    5  & 97.2  & 97.4  & 97.4  & 95.8  & 97.2  & \textbf{97.8} \\
    6  & 97.9  & \textbf{98.3}  & 98.2  & \textbf{98.3}  & 97.9  & 98.2 \\
    7  & 98.5  & 98.6  & 98.6  & \textbf{99.0}  & 98.5  & 98.7 \\
    8  & 97.7  & 97.8  & 97.9  & 97.9  & 97.7  & \textbf{98.5} \\
    9  & 98.2  & 98.5  & 98.4  & 97.3  & 98.2  & \textbf{98.8} \\
    10 & 98.3  & 98.5  & 98.5  & 97.6  & 98.3  & \textbf{99.0} \\
    11 & 97.4  & \textbf{97.9}  & 97.8  & 96.8  & 97.4  & 97.7 \\
    12 & 97.2  & 97.7  & 97.7  & 96.0  & 97.2  & \textbf{97.8} \\
    13 & 98.1  & 98.4  & 98.3  & 98.0  & 98.1  & \textbf{98.5} \\
    14 & 98.3  & 98.3  & 98.4  & 97.4  & 98.3  & \textbf{98.7} \\
    15 & 98.5  & 98.6  & 98.6  & 97.7  & 98.5  & \textbf{98.8} \\
    \midrule
    \multicolumn{7}{c}{\textbf{Average Training Time, [s]}} \\
    \midrule
    1-15  & 720 & 4740 & 3750 & \textbf{120} & 1560 & 2040 \\
    \midrule
    \multicolumn{7}{c}{\textbf{Per-Job Inference Time, [ms]}} \\
    \midrule
    1-15  & \textbf{0.003} & 0.481 & 0.230 & 2.614 & 0.008 & 20.68 \\
    \bottomrule
\end{tabular}
    \caption{Validation accuracy of ML models, average training and inference time}
    \label{tab:ml_models}
\end{table*}

For each dataset in Table~\ref{tab:datasets}, we generate 2000 instances with 500 jobs and 2000 with 1000 jobs, yielding 3 million job-level samples. Each dataset is split 80/20 into training and validation sets. Using the features from Section~\ref{subsec:developing_robust_features}, we train several machine learning models described in Section~\ref{subsec:ml_models}: a Multilayer Perceptron (MLP), three AutoML frameworks (AutoGluon, AutoSklearn, TabPFN), and two context-based models (CondProbs and Attn) based on conditional probabilities and attention mechanisms. Due to memory limits, AutoGluon and AutoSklearn are trained on 750,000 samples (500 instances each of 500 and 1000 jobs), while TabPFN is limited to 1000 samples. We evaluate all models by validation accuracy (in percentages) and inference time per job (in milliseconds).

As shown in Table~\ref{tab:ml_models}, the attention-based model achieves the highest validation accuracy on eleven out of fifteen datasets. However, it suffers from extremely high inference times: for a simple instance with 1000 jobs, predicting all labels takes over 20 seconds.
Meanwhile, the model based on conditional probabilities -- combined with the features we designed -- performs comparably to our proposed MLP on most datasets. This is expected, as the first network modeling the a priori estimates shares the same architecture as our MLP, and the second network replacing the conditional estimate offers no significant improvement over the baseline.
Thus, we do not further consider the conditional probabilities model and instead focus on the MLP.

Due to the slow inference of the attention model and the redundancy of the conditional probabilities model, we focus on the MLP and AutoML models.
While AutoML frameworks achieve about 0.5\% higher validation accuracy, the MLP is at least 75 times faster than the fastest AutoML option (AutoSklearn).
This makes the MLP the best trade-off between accuracy and inference speed.
Choosing the MLP brings additional benefits.
The MLP provides a fixed, interpretable architecture (defined by a consistent number of layers, neurons per layer, and activation functions), whereas AutoML frameworks yield different architectures across datasets.
Moreover, AutoGluon and AutoSklearn heavily rely on decision tree ensembles, which are known to be prone to overfitting \citep{Amro2021}.
For these reasons, we proceed with the MLP architecture in the remainder of this work.

\begin{figure}[t]
    \centering
    \begin{tikzpicture}
\begin{groupplot}[
    group style={
        group size=2 by 1,
        horizontal sep=2cm,
    },
    width=0.45\textwidth,
    height=0.35\textwidth,
    xlabel={Predicted probability},
    enlargelimits=false,
    ymin=0,
    grid=none,
    axis line style={very thick},
    tick style={very thick},
    label style={font=\small},
    tick label style={font=\small},
]

% --- Left plot ---
\nextgroupplot[ylabel={Frequency of errors}]
\addplot+[
    ybar,
    bar width=1pt,
    fill=cyan!50,
    draw=none,
] table [x index=0, y index=1] {
    0.05 0.001
    0.10 0.005
    0.15 0.012
    0.20 0.042
    0.25 0.096
    0.30 0.179
    0.35 0.284
    0.40 0.372
    0.45 0.456
    0.50 0.50
    0.55 0.344
    0.60 0.202
    0.65 0.148
    0.70 0.080
    0.75 0.054
    0.80 0.027
    0.85 0.012
    0.90 0.005
    0.95 0.002
};
\addplot[
    red,
    thick,
    domain=0:1,
    samples=200,
] {0.52 * exp(-25*(x-0.5)^2)};
\addplot[
    black,
    thick,
    dashed,
] coordinates {(0.5,0) (0.5,0.55)};

% --- Right plot ---
\nextgroupplot[
    ylabel={Frequency of occurrence},
    ymin=0,
    ymax=0.5,
]
\addplot[
    blue,
    thick,
    domain=0.002:0.997,
    samples=300,
] {0.001/(0.00005 + x*(1-x))};
\end{groupplot}
\end{tikzpicture}
    \caption{Error frequency (left) and distribution of predicted probabilities (right).}
    \label{fig:neuralnet}
\end{figure}
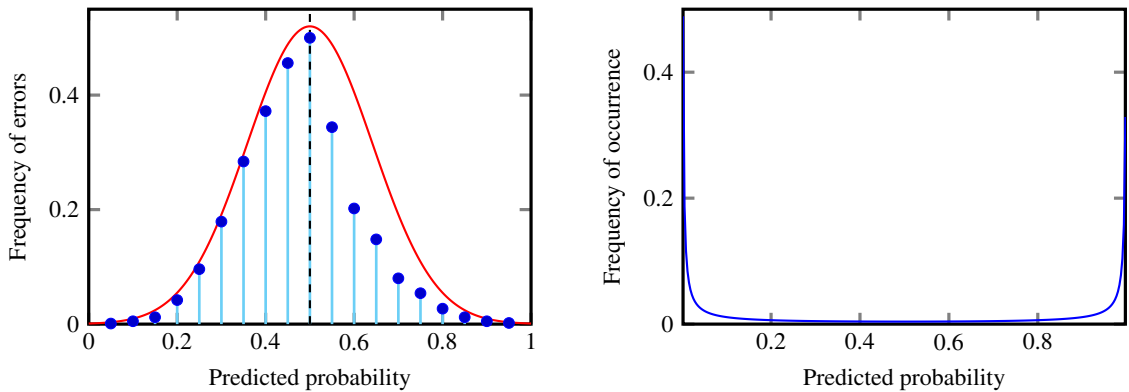

\paragraph{Reliability of Model Confidence Scores}
We assessed how well the MLP’s prediction scores correspond to actual prediction accuracy using 500{,}000 training samples.
The left plot in Figure~\ref{fig:neuralnet} shows the empirical error rate as a function of predicted probability, aggregated in bins of width 0.05. As expected, errors are most frequent around prediction scores of 0.5 and rare near 0 or 1, resulting in an approximately bell-shaped curve centered at 0.5.
The right plot shows the smoothed distribution of predicted probabilities, with most predictions concentrated near 0 or 1. This indicates that the model typically makes high-confidence predictions.
Taken together, these results suggest that when the model is confident -- which is the case for most predictions -- it is also very likely to be correct. This is a desirable property when using its outputs for heuristic decision-making.

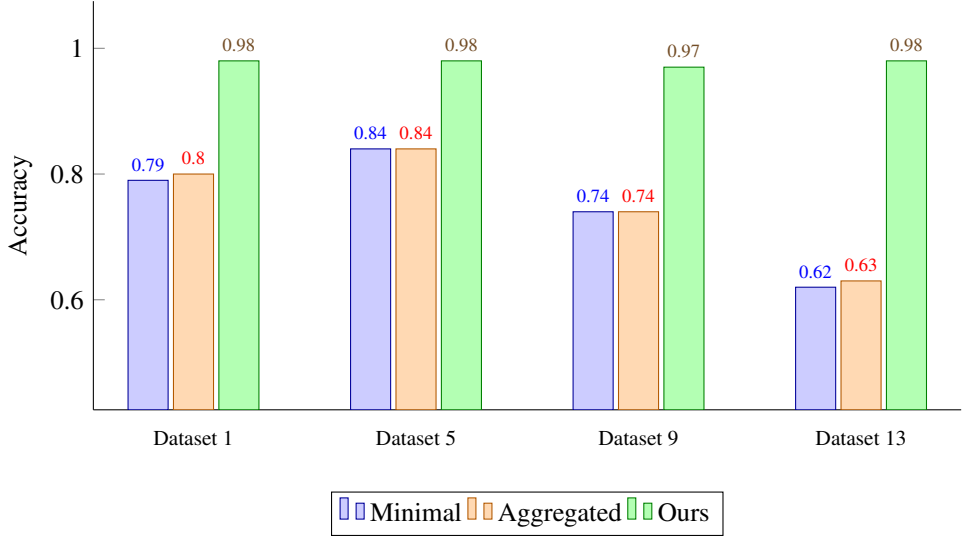
\begin{figure}[!htbp]
    \centering
    \begin{tikzpicture}
\begin{axis}[
    ybar,
    bar width=15pt,
    width=13cm,
    height=7cm,
    enlargelimits=0.15,
    ylabel={Accuracy},
    ymin=0.5, ymax=1.0,
    ylabel near ticks,
    symbolic x coords={Dataset 1, Dataset 5, Dataset 9, Dataset 13},
    xtick=data,
    xtick style={draw=none},
    x tick label style={font=\footnotesize},
    legend style={at={(0.5,-0.2)}, anchor=north, legend columns=3},
    nodes near coords,
    nodes near coords align={vertical},
    every node near coord/.append style={font=\scriptsize},
    axis y line*=left,
    axis x line*=bottom,
    bar shift auto,
    draw opacity=1
]

\addplot+[ybar, draw=blue!60!black, fill=blue!20] coordinates {(Dataset 1, 0.79) (Dataset 5, 0.84) (Dataset 9, 0.74) (Dataset 13, 0.62)};
\addplot+[ybar, draw=orange!70!black, fill=orange!30] coordinates {(Dataset 1, 0.80) (Dataset 5, 0.84) (Dataset 9, 0.74) (Dataset 13, 0.63)};
\addplot+[ybar, draw=green!50!black, fill=green!30] coordinates {(Dataset 1, 0.98) (Dataset 5, 0.98) (Dataset 9, 0.97) (Dataset 13, 0.98)};

\legend{Minimal, Aggregated, Ours}
\end{axis}
\end{tikzpicture}
    \caption{Accuracy of MLP trained on three feature representations across selected datasets.}
    \label{fig:feature_comp}
\end{figure}

\paragraph{Comparison of feature representations}
To assess the effect of instance-level information in input features, we compare three types of representations: (i) a minimal set of features consisting of only the basic characteristics of each job (``Minimal''), (ii) features extended with instance-level aggregation statistics -- averages, standard deviations, minima, and maxima over all jobs (``Aggregated''), and (iii) our structured representation described in Section~5.1 (``Ours''). Figure~\ref{fig:feature_comp} summarizes model accuracy for each representation on four selected datasets. The results confirm that our feature design consistently improves predictive performance. We believe this is because simply concatenating summary statistics is not sufficient for the neural network to effectively utilize instance-level information, and that this information should instead be integrated into the job-specific features themselves, as achieved by our proposed featurization approach.

\subsection{Comparison to the state-of-the-art}\label{subsec:results}

In this experiment, we evaluate our approach to the $1|\tilde{d}_i|\sum w_i U_i$ problem against the state-of-the-art methods reviewed in Section~\ref{sec:Literature Review}. The comparison covers both solution quality and computational efficiency across a diverse collection of benchmark datasets.

\paragraph{State-of-the-art heuristics}
We evaluate our method against four heuristic approaches: the max-profit relaxation-based heuristic from~\citep{Baptiste2010}, a set of simple rule-based heuristics, and two metaheuristics.

\begin{itemize}
    \item \textbf{Bapt et al.:} a classical scheduling heuristic originally designed for the $1|\tilde{d}_i|\sum w_i U_i$ problem~\citep{Baptiste2010}.

    \item \textbf{Rule-based:} a composite baseline that evaluates three naive rule-based prediction strategies inside our approach: (i) random (each job has a 50\% chance of being early or tardy), (ii) all jobs are early, and (iii) all jobs are tardy. After each step (i)--(iii), a scheduling algorithm (see Section~\ref{subsec:feas_algo}) transforms the predictions into feasible solutions and computes the objective value; the best of the three values is reported.

    \item \textbf{Genetic Algorithm (GA):} adapted from~\citep{Sevaux2003}, which addresses the weighted number of tardy jobs without hard deadlines. Each solution candidate is represented as a permutation of jobs. We adapt the algorithm to our setting as follows: (i) for each permutation, we compute job completion times; (ii) compare each job’s completion time to its due date to assign early/tardy labels (simulating an oracle); (iii) pass the labeled jobs to our scheduling framework (Section~\ref{subsec:feas_algo}), which constructs a feasible schedule and computes its cost. The original method includes a local search component with quadratic complexity, however, we excluded local search from our implementation due to excessive runtime on large instances.

    \item \textbf{Honey Badger:} a recent metaheuristic by~\cite{Hashim2022} which has been shown to perform well across a variety of scheduling problems~\citep{Hassan2024}. In our adaptation, each candidate solution is a vector of real numbers in $[0,1]$, which is rounded to a binary vector representing early/tardy predictions. These predictions are then passed to the same scheduling framework used in ours and the GA approaches.
\end{itemize}

\paragraph{Evaluation criteria and settings}
All methods are evaluated on the fifteen benchmark datasets described in Section~\ref{subsec:datasets}, each comprising 100–200 instances with varying sizes and distributions. 
We use two performance criteria:
\begin{enumerate}
    \item The percentage of instances solved to optimality ($n_{opt}$);
    \item The average optimality gap:
    \begin{equation}
        \Delta_{avg} = \frac{f^* - f(s)}{f^*} \cdot 100\%,
    \end{equation}
    where $f(s)$ and $f^*$ denote the objective value of the constructed and optimal schedules, respectively.
\end{enumerate}

A timeout of 300 seconds is imposed for each heuristic per instance. For our approach, the ILP-based routine in the \texttt{Refine} function (Section \ref{subsec:refine_predictions}) is limited to 60 seconds, although in practice it typically terminates within fractions of a second.

\begin{table*}[htbp]
    \small
    \centering
    \captionsetup{justification=centering}
    \renewcommand{\arraystretch}{1.2}
    \begin{tabularx}{\textwidth}{c c *{6}{>{\centering\arraybackslash}X}}
    \toprule
    \multirow{2}{*}{\textbf{Method}} & \multirow{2}{*}{\textbf{Metric}} & \multicolumn{6}{c}{\textbf{Instance Size}} \\
    \cmidrule(lr){3-8}
    & & \textbf{500} & \textbf{1000} & \textbf{2000} & \textbf{3000} & \textbf{4000} & \textbf{5000} \\
    \midrule
    \multirow{2}{*}{Proposed} 
    & $\Delta_{avg}$ & 0.009 & 0.002 & 0.001 & 0.002 & 0.001 & 0.001 \\
    & $n_{opt}$ & 95 & 95 & 88 & 80 & 72 & 68 \\
    \midrule

    \multirow{2}{*}{Bapt et al} 
    & $\Delta_{avg}$ & 5.1 & 7.1 & 7.2 & 7.7 & 8.8 & 10 \\
    & $n_{opt}$ & 38 & 35 & 40 & 41 & 45 & 46 \\
    \midrule

    \multirow{2}{*}{GA} 
    & $\Delta_{avg}$ & 29.1 & 30.7 & 31.7 & 32.6 & 33.1 & 33.3 \\
    & $n_{opt}$ & 0 & 0 & 0 & 0 & 0 & 0 \\
    \midrule

    \multirow{2}{*}{Honey Badger} 
    & $\Delta_{avg}$ & 23.9 & 25.5 & 26.8 & 27.8 & 28.1 & 28.3 \\
    & $n_{opt}$ & 0 & 0 & 0 & 0 & 0 & 0 \\
    \midrule

    \multirow{2}{*}{Rule-based} 
    & $\Delta_{avg}$ & 32.3 & 32.6 & 32.8 & 32.9 & 32.9 & 32.9 \\
    & $n_{opt}$ & 0 & 0 & 0 & 0 & 0 & 0 \\
    \bottomrule
\end{tabularx}
    \caption{Average number of optimal solutions and optimality gap per instance size (mean over all datasets)}
    \label{tab:results}
\end{table*}

\begin{figure*}[htbp]
    \centering
    \begin{tikzpicture}
    \begin{axis}[
        ybar,
        bar width=7pt,
        symbolic x coords={1, 2, 3, 4, 5, 6, 7, 8, 9, 10, 11, 12, 13, 14, 15},
        xtick=data,
        xticklabel style={rotate=0},
        ylabel={$\Delta_{avg}$ [\%, log scale]},
        xlabel={Dataset ID},
        ymode=log,
        log basis y=10,
        legend style={at={(0.5,1.05)}, anchor=south, legend columns=2},
        width=\textwidth,
        height=6cm,
        ymin=0.00001,
        ymajorgrids=true,
        grid style=dashed
    ]
    % Proposed
    \addplot+[fill=blue!50] coordinates {
    (1, 0.000083) (2, 0.002467) (3, 0.013883) (4, 0.002950) (5, 0.003733) (6, 0.000917) (7, 0.000483) (8, 0.004717) (9, 0.000983) (10, 0.000050) (11, 0.011400) (12, 0.000517) (13, 0.000200) (14, 0.000117) (15, 0.000083) };
    
    % Best of Others
    \addplot+[fill=red!50] coordinates {
    (1, 0.007442) (2, 8.584260) (3, 2.260083) (4, 5.826763) (5, 27.23) (6, 0.168547) (7, 0.091533) (8, 0.003372) (9, 0.678965) (10, 1.000175) (11, 0.741150) (12, 0.872165) (13, 0.071422) (14, 0.071270) (15, 0.136740)};
    \legend{Proposed, Best of Others}
    \end{axis}
\end{tikzpicture}
    \caption{Comparison of average optimality gap (log scale): proposed method vs. best alternative across datasets.}
    \label{fig:opt_gaps}
\end{figure*}
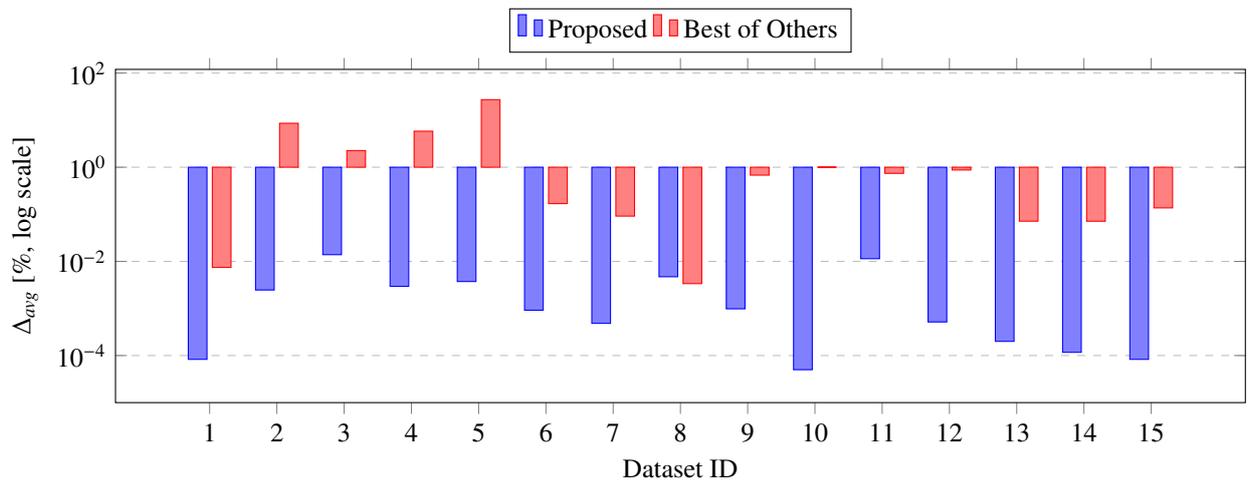

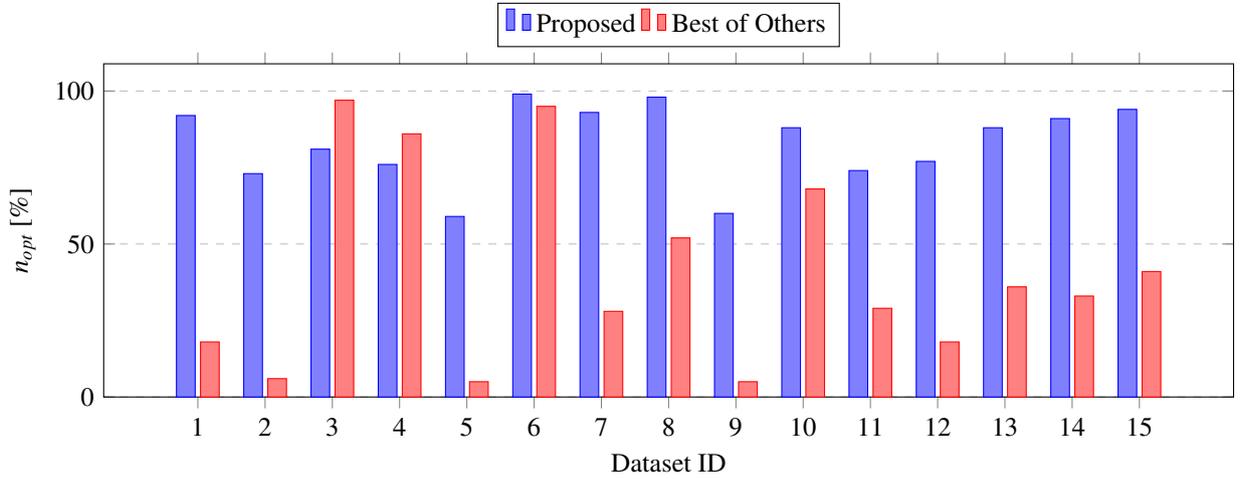
\begin{figure*}[htbp]
    \centering
    \begin{tikzpicture}
    \begin{axis}[
        ybar,
        bar width=7pt,
        symbolic x coords={1, 2, 3, 4, 5, 6, 7, 8, 9, 10, 11, 12, 13, 14, 15},
        xtick=data,
        xticklabel style={rotate=0},
        ylabel={$n_{opt}$ [\%]},
        xlabel={Dataset ID},
        legend style={at={(0.5,1.05)}, anchor=south, legend columns=2},
        width=\textwidth,
        height=6cm,
        ymin=0,
        ymajorgrids=true,
        grid style=dashed
    ]
    % Proposed
    \addplot+[fill=blue!50] coordinates {
        (1, 92) (2, 73) (3, 81) (4, 76) (5, 59)
        (6, 99) (7, 93) (8, 98) (9, 60) (10, 88)
        (11, 74) (12, 77) (13, 88) (14, 91) (15, 94)
    };
    % Best of Others
    \addplot+[fill=red!50] coordinates {
        (1, 18) (2, 6) (3, 97) (4, 86) (5, 5)
        (6, 95) (7, 28) (8, 52) (9, 5) (10, 68)
        (11, 29) (12, 18) (13, 36) (14, 33) (15, 41)
    };
    \legend{Proposed, Best of Others}
    \end{axis}
\end{tikzpicture}
    \caption{Comparison of the average number of optimal solutions: proposed method vs. best alternative across datasets.}
    \label{fig:num_opts}
\end{figure*}

\begin{table}[htbp]
    \centering
    \begin{tabularx}{\textwidth}{|
    >{\centering\arraybackslash}X|
    >{\centering\arraybackslash}X|
    >{\centering\arraybackslash}X|
    >{\centering\arraybackslash}X|
    >{\centering\arraybackslash}X|
    >{\centering\arraybackslash}X|
    >{\centering\arraybackslash}X|} 
    \hline
     & \multicolumn{3}{c|}{Proposed approach $\;$[s]} & \multicolumn{3}{c|}{Baptiste et al $\;$[s]} \\
    \hline
    n & min & avg & max & min & avg & max \\
    \hline
    500 & 0.1 & 0.4 & 1.29 & 0.03 & 15.28 & $>$ 300  \\
    1000 & 0.12 & 0.96 & 1.33 & 0.05 & 21.93 & $>$ 300  \\
    2000 & 0.24 & 3.45 & 4.5 & 0.19 & 22.79 & $>$ 300  \\
    3000 & 0.37 & 7.43 & 9.69 & 0.31 & 24.05 & $>$ 300  \\
    4000 & 0.5 & 13.05 & 17.09 & 0.54 & 28.28 & $>$ 300  \\
    5000 & 0.65 & 20.06 & 26.36 & 0.71 & 32.1 & $>$ 300  \\
    \hline
    \hline
     & \multicolumn{3}{c|}{Metaheuristics (per epoch) $\;$[s]} & \multicolumn{3}{c|}{Rule-based $\;$[s]} \\
    \hline
    n & min & avg & max & min & avg & max \\
    \hline
    500 & 0.32 & 0.43 & 0.61 & 0.02 & 0.03 & 0.03  \\
    1000 & 0.70 & 0.88 & 1.32 & 0.05 & 0.05 & 0.06  \\
    2000 & 1.67 & 2.08 & 2.76 & 0.12 & 0.13 & 0.14  \\
    3000 & 2.95 & 3.22 & 3.95 & 0.2 & 0.22 & 0.24  \\
    4000 & 4.39 & 4.9 & 5.64 & 0.31 & 0.32 & 0.35  \\
    5000 & 6.18 & 6.81 & 7.34 & 0.42 & 0.45 & 0.49  \\
    \hline
\end{tabularx}
    \caption{Runtime comparison of different methods for varying problem sizes.}
    \label{tab:runtimes}
\end{table}

\paragraph{Results Analysis}
The results in Table~\ref{tab:results} show that our method consistently outperforms all the baselines across the introduced datasets. Compared to \emph{Bapt et al.}, our approach achieves optimality gaps nearly three orders of magnitude smaller (always $<$0.01\% vs. 5--10\%) and solves up to 95\% of instances to optimality (vs. 46\% for \emph{Bapt et al.}). The comparison is even more pronounced against metaheuristics (\emph{GA} and \emph{Honey Badger}), which show large gaps (29--33\% and 24--28\%, respectively) and fail to solve any instance optimally. The rule-based heuristic performs the worst, with gaps up to 41\% and no optimal solutions.

Figures~\ref{fig:opt_gaps}–\ref{fig:num_opts} show the average optimality gaps and the number of optimal solutions achieved by our approach, compared to the best of the considered state-of-the-art methods. The comparison is done per dataset, across all instance sizes. In nearly all cases, our method outperforms the best competing heuristic -- typically the approach by \cite{Baptiste2010}. An exception occurs with Dataset~5, where \emph{Bapt et al.}'s gap increases sharply to 94.5\%, making \emph{Honey Badger} (27.23\%) the best among the baselines for that case (see Section~\ref{subsec:further_discussion}).
Our method achieves significantly smaller optimality gaps on 14 of 15 datasets (sometimes the gap smaller in several orders of magnitude, e.g. Datasets 4-7 and 10) and the highest number of optimal solutions on 12 out of 15 datasets. This highlights the robustness of our method across different instance distributions and dataset properties. Even in those datasets where another method achieves slightly higher number of optima (e.g., Datasets~3 and~4), our approach still leads in terms of optimality gap. Conversely, in Dataset~8, our method finds more optima but has a slightly higher average gap. Such trade-offs are expected: failing to find the optimum on a few instances may result in larger gaps, while finding fewer optima but with small gaps on the rest can still yield strong average performance.

Table~\ref{tab:runtimes} provides a comparison of runtime performance. Metaheuristics were run for a sufficient number of epochs to stay within the global 300-second time budget, and reported runtimes correspond to a single epoch. Since the per-epoch runtimes of \emph{GA} and \emph{Honey Badger} are similar, we report them jointly. Our approach achieves a notable runtime performance with a maximal runtime of under 30 seconds. In contrast, \emph{Bapt et al.} occasionally exceeds the time limit. The rule-based heuristic is the fastest across all instance sizes but suffers from poor solution quality, with large gaps and no optimal solutions (see Table~\ref{tab:results}).

In summary, our method consistently achieves low optimality gaps, solves most instances to optimality (typically 80--100\%), and does so with competitive runtimes, demonstrating strong performance across diverse datasets.

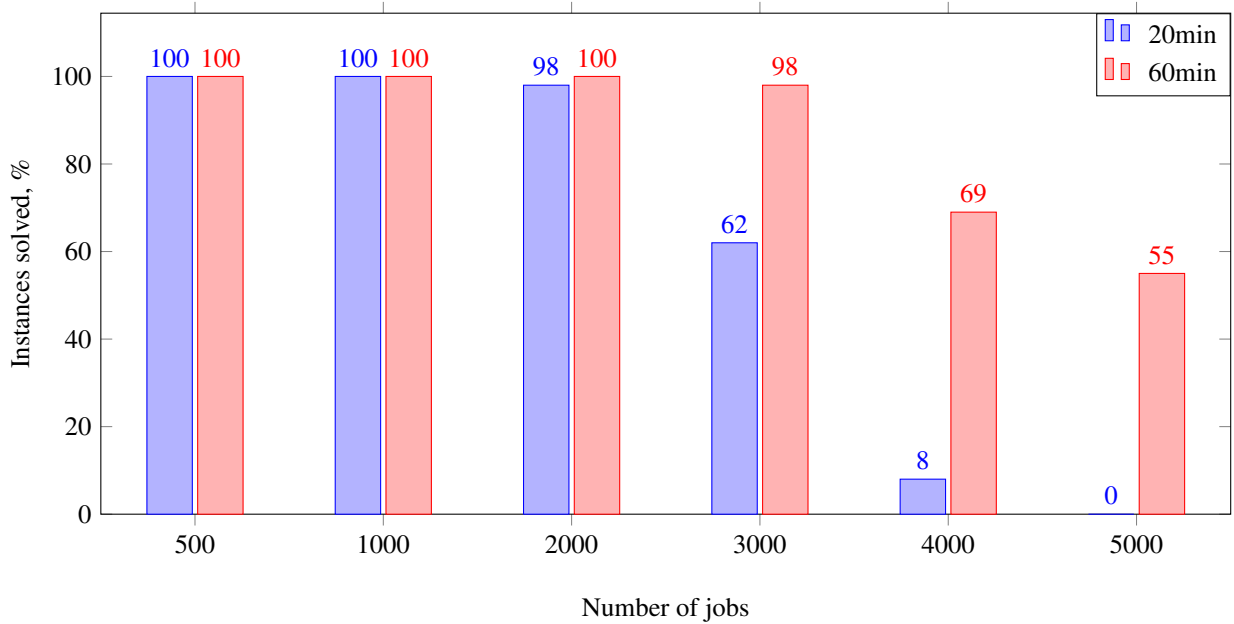
\begin{figure}
    \centering
    \begin{tikzpicture}
\begin{axis}[
    ybar,
    bar width = 17pt,
    height=0.5\textwidth,
    width=\textwidth,
    xlabel={Number of jobs},
    xlabel style={yshift=-10pt},
    ylabel={Instances solved, \%},
    ymin = 0, ymax = 109, xmax=6,
    xtick = data,
    xticklabels={500, 1000, 2000, 3000, 4000, 5000},
    enlarge y limits = {value = .05, upper},
    enlarge x limits = {abs = .5},
    nodes near coords,
    legend style={
        at={(1,1)},
        anchor=north east,
        column sep=0.5em,
        row sep=0.5ex,
        },
    ]
    
\addplot coordinates {(1,100) (2,100) (3,98) (4,62) (5,8) (6,0)};
\addplot coordinates {(1,100) (2,100) (3,100) (4,98) (5,69) (6,55)};
\legend {20min, 60min};

\end{axis}
\end{tikzpicture}
    \caption{Percentage of Dataset 5 instances solved within 20-minute and 60-minute timeouts by the algorithm from~\citep{Baptiste2010}}
    \label{fig:discussion}
\end{figure}

\subsection{Empirical Validation of the Instability in the Exact Approach}\label{subsec:further_discussion}

This section provides empirical evidence highlighting the instability of the exact algorithm proposed by~\cite{Baptiste2010}. 
While the algorithm typically solves most instances within a 20-minute time limit, there are two notable exceptions: the algorithm fails to solve approximately 5\% instances with 5000 jobs from Datasets 2 and 4 within the standard 20-minute limit, though extending the time limit to 60 minutes resolves the majority of them; (ii) instances from the Dataset 5 pose a significant challenge with numerous instances remaining unsolved even after the 60-minute timeout. Figure~\ref{fig:discussion} illustrates the percentage of Dataset 5 instances solved under two time limits.
These results emphasize the limitations of Baptiste’s approach, particularly for Dataset 5, where both the exact algorithm and the heuristic perform poorly. 
In such cases, our data-driven approach stands out as the only practical solution for producing high-quality results in less than half a minute.

\begin{figure}[t]
    \centering
    \begin{tikzpicture}
\begin{axis}[
    width=12cm,
    height=8cm,
    xlabel={Accuracy (\%)},
    ylabel={$\Delta_{avg}$ [\%]},
    xmin=50, xmax=100,
    ymin=0, ymax=50,
    grid=both,
    legend style={at={(0.5,-0.25)}, anchor=north, legend columns=4},
]

% --- Group 1:
\addplot[
    only marks,
    mark=*,
    color=red,
    mark size=2pt,
] coordinates {
    (70,32) (71,35) (72,37) (73,35) (74,35) (75,28) (76,32) (79,23) (78,37)
};
\addlegendentry{70–79\%}

% --- Group 2:
\addplot[
    only marks,
    mark=square*,
    color=blue,
    mark size=2pt,
] coordinates {
    (80,25) (81,20) (82,23) (83,25) (84,20) (85,23) (86,21) (87,19) (88,18) (89,13)
};
\addlegendentry{80–89\%}

% --- Group 3:
\addplot[
    only marks,
    mark=triangle*,
    color=green!60!black,
    mark size=2.5pt,
] coordinates {
    (90,7) (91,4) (92,4) (93,2) (94,2)
};
\addlegendentry{90–94\%}

% --- Group 4:
\addplot[
    only marks,
    mark=star,
    color=purple,
    mark size=3pt,
] coordinates {
    (95,0.5) (96,0.1) (97,0.01) (98,0.01) (99,0.01)
};
\addlegendentry{95–99\%}

\end{axis}
\end{tikzpicture}
    \caption{Relationship between training accuracy of the model and the average optimality gap.}
    \label{fig:acc_gap_sensitivity}
\end{figure}
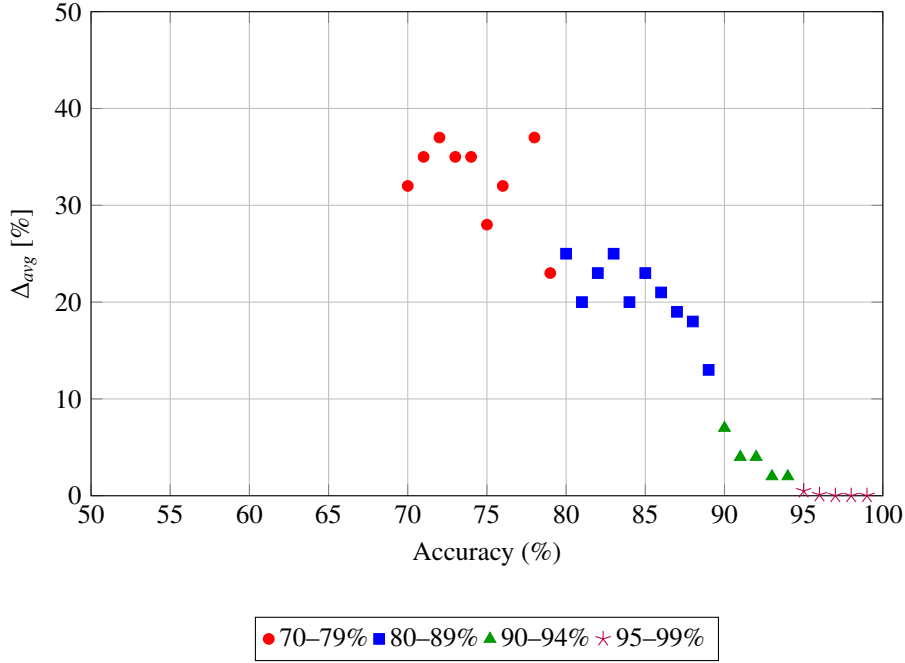

\subsection{Impact of model accuracy on scheduling performance}

To evaluate whether the accuracy of the machine learning model correlates with the final performance of the scheduling pipeline, we conducted an additional experiment. For each dataset and model type (MLP, conditional probabilities, attention-based), we trained multiple models with varying levels of accuracy by modifying the number of training epochs and regularization parameters.
Each of these models was then used in the full scheduling pipeline to generate early/tardy predictions and compute the resulting optimality gap. The results are summarized in Figure~\ref{fig:acc_gap_sensitivity}, where we report the average optimality gap grouped by training accuracy ranges (70--79\%, 80--89\%, 90--94\%, and 95--100\%).
The figure shows a clear trend across all model types: as the training accuracy increases, the resulting optimality gap consistently decreases. This indicates that training accuracy serves as a good proxy for scheduling performance, allowing us to assess the quality of a model without having to run the full scheduling pipeline.

\section{Conclusions and Future Work}\label{sec:Conclusion}
This paper presents a novel data-driven approach to address the fundamental scheduling problem of minimizing the weighted number of tardy jobs on a single machine. 
By integrating machine learning with problem-specific properties, our approach consistently outperforms state-of-the-art heuristics, providing better quality solutions within the same time limits. 
We comprehensively study various ML models and select one with strong generalization capabilities across diverse data distributions. 
Our approach constructs a feasible solution for every instance that has at least one, addressing a common challenge in incorporating ML into combinatorial optimization problems. 
Additionally, our model showcases wide applicability across different data distributions, surpassing standard practices in operational research. 
Overall, our proposed method offers a significant improvement over existing approaches, contributing to the advancement of scheduling algorithms in practical domains.

As a direction for future research, the considered problem can be extended with release times, e.g., $1|r_j, \Tilde{d}_j|\sum w_jU_j$. 
This step significantly increases the complexity, as it becomes NP-hard even to find a feasible solution. 
Alternatively, the research can be extended to the parallel machines scheduling problem $P|\pi_j \sim \mathcal{N}(\mu_j, \sigma_j^2)|Pr(C_{max} \leq \delta)$, as outlined in the recent study by~\cite{Novak2022}. 
This problem is strongly NP-hard, and currently, it relies on a genetic algorithm-based heuristic for an initial solution, followed by a refinement process. 
However, a data-driven approach could offer a promising alternative. Notably, the problem assumes normally distributed processing times, which could facilitate the application of machine learning techniques, as the well-defined distribution enhances the potential for effective learning.

\section*{Acknowledgements}
This work was funded by the Czech Ministry of Education, Youth and Sports under the ERC~CZ project POSTMAN no.~LL1902, by the European Union under the project ROBOPROX (reg.\ no.\ CZ.02.01.01/00/22\_008/0004590), by the Grant Agency of the Czech Republic under the Project GACR 22-31670S and by the Grant Agency of the Czech Technical University in Prague, grant No. SGS25/144/OHK3/3T/13. \emph{Personal note (Nikolai Antonov): To my mom and granny, my beloved Sophia, and my friend Alexander.}

%% The Appendices part is started with the command \appendix;
%% appendix sections are then done as normal sections
%% \appendix

\bibliographystyle{elsarticle-harv} 
\bibliography{references}

\end{document}